\tikzset{>=latex}
\theoremstyle{plain}
\newtheorem{theorem}{Theorem}[section]
\newtheorem{lemma}[theorem]{Lemma}
\theoremstyle{definition}
\newtheorem{definition}[theorem]{Definition}
\theoremstyle{remark}
\icmltitlerunning{Implicit Regularization with Polynomial Growth in Deep Tensor Factorization}
\begin{document}

\twocolumn[
\icmltitle{Implicit Regularization with Polynomial Growth in Deep Tensor Factorization}



\icmlsetsymbol{equal}{*}

\begin{icmlauthorlist}
\icmlauthor{Kais Hariz}{amu,enit}
\icmlauthor{Hachem Kadri}{amu}
\icmlauthor{Stéphane Ayache}{amu}
\icmlauthor{Maher Moakher}{enit}
\icmlauthor{Thierry Artières}{amu,centrale}
\end{icmlauthorlist}

\icmlaffiliation{enit}{LAMSIN, National Engineering School of Tunis, University of Tunis El Manar, Tunis, Tunisia}
\icmlaffiliation{amu}{Aix Marseille University, CNRS, LIS, Marseille, France}
\icmlaffiliation{centrale}{Ecole Centrale de Marseille, Marseille, France}

\icmlcorrespondingauthor{Kais Hariz}{kais.hariz@univ-amu.fr}
\icmlcorrespondingauthor{Hachem Kadri}{hachem.kadri@univ-amu.fr}

\icmlkeywords{Implicit Regularization, Deep Learning, Tensor Factorization}

\vskip 0.3in
]



\printAffiliationsAndNotice{}  

\begin{abstract}
We study the implicit regularization effects of deep learning in tensor factorization. While implicit regularization in deep matrix and \enquote*{shallow} tensor  factorization via linear and certain type of non-linear neural networks promotes low-rank solutions with at most quadratic growth, we show that its effect in deep tensor factorization grows polynomially with the depth of the network. This provides a remarkably faithful description of the observed experimental behaviour. Using numerical experiments, we demonstrate the benefits of this implicit regularization in yielding a more accurate estimation and better convergence properties. 
\end{abstract}

\section{Introduction}
\label{sec:intro}

A major challenge in deep learning is to understand the underlying mechanisms behind the ability of deep neural networks to generalize. 
This is of fundamental importance to reconcile the observation that deep neural networks generalize well even for situations where the number of learnable parameters is much larger than the number of training data.
Starting with the report by~\citep{neyshabur2014search}, a body of work has emerged exploring the role of implicit regularization in deep learning~\citep{gunasekarimplicit, arora2019implicit, kumar2020implicit, razin2020implicit, li2021towards, razin2021implicit, milanesi2021implicit, zou2021benefits}.
Our work contributes to this effort by providing insight into the behaviour of implicit regularization in deep tensor factorization where we focus on deep versions of canonical rank and Canonical-Polyadic (CP) factorizations~\cite{Kolda}.

Attempts of studying implicit regularization in deep learning have identified matrix completion as a suitable test-bed~\citep{arora2019implicit}. 
\citet{gunasekarimplicit} observed that  
for matrix factorization when there are no constraints on the rank, the solution of  the optimization problem via gradient descent turns out to be a low-rank matrix. Furthermore, they conjectured that, with small enough learning rate and initialization, gradient descent on full-dimensional matrix factorization converges to the solution with minimal nuclear norm.
\citet{arora2019implicit} and \citet{razin2020implicit} extended the analysis to deep matrix factorization and showed in this case that implicit regularization of gradient descent cannot be formulated as a norm-minimization problem. By studying the dynamics of  gradient descent, they found theoretically and experimentally that it instead promotes sparsity of the singular values of the learned matrix, indicating that implicit regularization in deep learning has to be studied  from a dynamical point of view. 
Moreover, \citet{razin2021implicit} studied implicit regularization in \enquote*{shallow} tensor decomposition and showed an equivalence between a tensor completion task and a prediction problem with a nonlinear neural network, stressing the interest of studying the tensor completion task. 

Our main contributions focus on implicit regularization in {deep} Canonical-Polyadic tensor factorization and can be summarized as follows:
\begin{itemize}
    \item we prove that the effect of the implicit regularization in deep tensor CP factorization via gradient descent grows polynomially with the depth of the factorization~(Theorem~\ref{th:dy_deepcp}),
	\item we theoretically show under some conditions that this effect in the overparameterized regime leads to produce solutions with low tensor rank~(Theorem~\ref{th:rank_deepcp}),
	\item we perform numerical experiments that support our theoretical results, illustrating that the implicit regularization could yield more accurate estimations and better convergence properties~(Section~\ref{sec:xp}).
\end{itemize}

\section{Problem Setup and Background}

\begin{figure*}[th]
	\centering
	\includegraphics[scale=0.5]{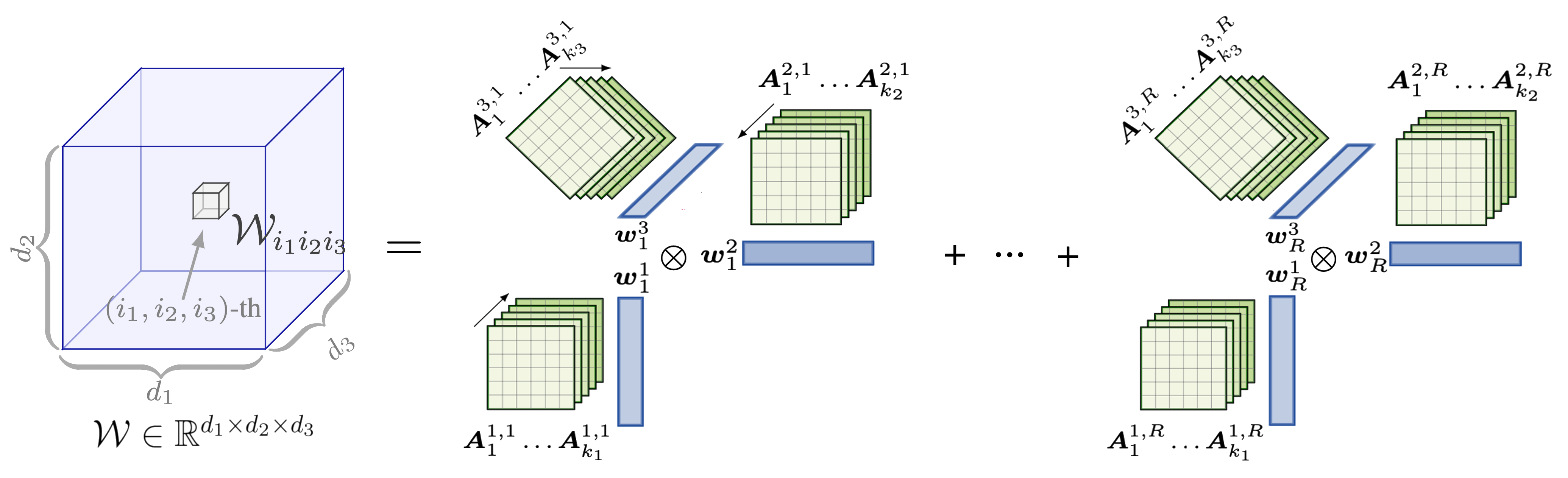}
	\caption{Overparameterized deep CP factorization.}
		\label{fig:deepcp}
\end{figure*}

We study implicit regularization in deep tensor factorization, and so we consider the problem of tensor completion  with {overparameterized} factorization. By \enquote*{overparameterized} we mean that no
assumptions are made on the rank of the tensor. This is crucial in order to analyze the effect of the implicit regularization on the learned tensor.
\paragraph{Notation and terminology.}
Throughout the paper, boldface lowercase letters such as $\bm{w}$ denote vectors, boldface capital letters such as $\bm{A}$, $\bm{W}$ denote matrices, and calligraphic letters such as $\mathcal{W}$ denote tensors.
The $(i_1, i_2, \ldots, i_N)$-th entry of an $N$-order tensor  $\mathcal{W}\in\mathbb{R}^{d_1\times d_2\times \cdots\times d_N}$ will be denoted by $\mathcal{W}_{i_1, i_2, \ldots, i_N}$ where $i_n=1, 2, \ldots, d_n$, for all $n=[1,\ldots, N]$. Given two tensors $\mathcal{V}, \mathcal{W}\in \mathbb{R}^{d_1\times d_2\times \cdots\times d_N}$ their scalar product writes
\begin{equation*}
	\langle\mathcal{V}, \mathcal{W} \rangle = \sum_{i_1}\sum_{i_2}\cdots \sum_{i_N}\mathcal{V}_{i_1, i_2\cdots i_N} \mathcal{W}_{i_1, i_2\cdots i_N},
\end{equation*}
and the Frobenius norm of $\mathcal{W}$ is defined as $\|\mathcal{W}\|:=\sqrt{\langle\mathcal{W}, \mathcal{W}\rangle}$.
We also write $\|\bm{A}\|$ and $\|\bm{w}\|$ for the Frobenius norm of a matrix $\bm{A}$ and the Euclidean norm of a vector $\bm{w}$, respectively.

Given $N$ vectors $\bm{w}_1\in\mathbb{R}^{d_1}, \bm{w}_2\in\mathbb{R}^{d_2}, \ldots, \bm{w}_N\in\mathbb{R}^{d_N} $, their outer product is the tensor whose $(i_1, \ldots, i_N)$-th entry writes $\left(\bm{w}_1\otimes \bm{w}_2\otimes\cdots\otimes \bm{w}_N\right)_{i_1, i_2\ldots, i_N}=(\bm{w}_1)_{i_1}(\bm{w}_2)_{i_2}\ldots (\bm{w}_N)_{i_N}$ for all $i_n\in[1,\ldots, d_n]$, $n\in[1, \ldots, N]$. An $N$-th order tensor $\mathcal{W}$ is called a rank-$1$ tensor if it can be written as the outer product of $N$ vectors, i.e. $\mathcal{W}=\bm{w}_1\otimes \bm{w}_2\otimes\cdots\bigotimes \bm{w}_N$. 
This leads to the notion of canonical rank and Canonical-Polyadic (CP) factorization~\cite{Kolda}.

The canonical rank of an arbitrary $N$-th order tensor $\mathcal{W}$ is the minimal number of rank-$1$ tensors
 that sum up to $\mathcal{W}$. 
 Decompositions into rank-1 terms are called CP factorization. A rank R tensor $\mathcal{W}$ can be written as:\\
 \begin{equation}
 	\label{eq:cp}
 	\mathcal{W} = \sum_{r=1}^R \bm{w}_r^1 \otimes\ldots\otimes \bm{w}_r^N.
 \end{equation}
We will use the term \enquote*{block} to refer to a rank-1 tensor of the CP decomposition so that $\mathcal{W}$ in Eq.~\eqref{eq:cp} has $R$ blocks and $\bm{w}_r^1 \otimes\ldots\otimes \bm{w}_r^N$ is its $r$-th block.
In this work we focus only on the CP factorization of incomplete tensor. 
We did not consider any other tensor decomposition, such as Tucker and TensorTrain~\cite{Kolda, grasedyck2013literature}, which remain then out of the scope of this paper.

\paragraph{Overparameterized deep CP factorization.}
Let $\mathcal{A}\in\mathbb{R}^{d_1 \times\cdots\times d_N}$ the ground truth tensor we want to recover and let us denote by $\Omega\subset\{1, 2, \ldots d_1\}\times \cdots \times\{1, 2, \ldots, d_N\}$ the set which indexes the observed entries. 
To tackle the problem of tensor completion~\cite{gandy2011tensor, song2019tensor}, we minimize the reconstruction loss defined by
\begin{equation*}\label{tensor_completion_loss}
	\mathcal{L}(\mathcal{W}) = \frac{1}{|\Omega|}\sum_{(i_1, \ldots, i_N)\in\Omega} \ell\left(\mathcal{W}_{i_1, \ldots, i_N} - \mathcal{A}_{i_1, \ldots, i_N}\right),
\end{equation*}
where $\ell$ is differentiable and locally smooth. The square loss, which we used
in our experiments, is obtained when $\ell(z) = \frac{1}{2} z^2, \forall z\in\mathbb{R}$.
In order to be able to investigate the mechanisms of implicit regularization in deep tensor factorization, we  consider the following overparameterized deep CP decomposition (see Figure~\ref{fig:deepcp}):
 \begin{equation}
 	\label{eq:deepcp0}
 		\mathcal{W} \!=\! \sum_{r=1}^R \! \left(\!\bm{A}_1^{1,r}\!\!\ldots\! \bm{A}_{k_1}^{1,r} \bm{w}_r^1\!\right) \otimes\!\ldots\!\otimes \left(\!\bm{A}_1^{N,r}\!\!\ldots\! \bm{A}_{k_N}^{N,r}\bm{w}_r^N\!\right)\!\!,
\end{equation}
where $\bm{w}_r^n \in \mathbb{R}^{d_n}$ and $\bm{A}_i^{n,r} \in \mathbb{R}^{d_n \times d_n}$, $\forall n=1,\ldots,N$ and $i=1,\ldots, k_n$.
We take a large $R$ value to avoid any restriction of the CP rank. 
The matrices $\bm{A}_1^{n,r},\ldots, \bm{A}_{k_n}^{n,r}$ can be seen as a deep matrix factorization for the $n$-th mode and the $r$-th block of the CP decomposition and $k_n$ is the depth of the factorization on the mode $n$. All these matrices are of dimension $d_n \times d_n$ such that no constraint on the rank is imposed~\citep{arora2019implicit}. When they are fixed to the identity matrix, we recover the standard CP decomposition as defined in Eq.~\eqref{eq:cp}.
In the sequel, we use  the following compact form to rewrite  Eq.~\eqref{eq:deepcp0}:
 \begin{equation}
	\label{eq:deepcp}
	\mathcal{W} = \sum_{r=1}^R  \bigotimes_{n=1}^N  \prod_{i=1}^{k_n}  \bm{A}_i^{n,r} \bm{w}_r^n.
\end{equation}
Our main aim is to highlight the role of implicit regularization in {deep CP tensor factorization} and characterize its dependence on the depth of the factorization.
%
In the following, we consider learning a tensor $\mathcal{W}$ which has the form~(\ref{eq:deepcp})
by minimizing the loss function $\mathcal{L}(\mathcal{W}) = \Phi\Big(\{\bm{w}_r^n\}_{r=1\ n=1}^{\ R\ \ \ N}, \{\bm{A}_i^{n,r}\}_{r=1\ n=1\ i=1}^{\ R\ \ \ N\ \ k_n}\Big )$ using gradient descent.
With infinitesimally small learning rate and non zero initialization, we have 
$$\frac{d}{dt}\bm{w}_r^n(t)=-\frac{\partial}{\partial \bm{w}_r^n}\Phi \Big(\{\bm{w}_{r'}^{n'}\!(t)\}_{r',n'},\{\bm{A}_i^{n',r'}\!(t)\}_{r',n',i}\Big),$$
and
$$\frac{d}{dt}\bm{A}_i^{n,r}(t)=\!-\frac{\partial}{\partial\bm{A}_{i}^{n,r}}\Phi \Big(\!\{\bm{w}_{r'}^{n'}\!(t)\}_{r',n'},\! \{\bm{A}_i^{n',r'}\!(t)\}_{r',n',i}\Big).$$


Note that \citet{razin2021implicit} have made a connection between  tensor completion via CP tensor factorization and a certain-type of  non-linear {one hidden layer} neural network, motivating their work as a important step towards the study of implicit regularization in standard neural networks. From this perspective, our overparameterized CP factorization may be viewed as an extension of this statement to the deep setting.

\paragraph{Related work.}
The works that are most related to ours are~\citet{arora2019implicit,razin2021implicit}.
\citet{arora2019implicit} considered deep matrix factorization, which consists in parameterizing the learned matrix $\bm{W}\in\mathbb{R}^{d_1\times d_2}$  as $\bm{W}=\bm{W}_k \bm{W}_{k-1}\ldots \bm{W}_2 \bm{W}_1$ for some $k\in\mathbb{N}$ and with $\{\bm{W}_i\}_{i=1}^k$ to be such that no constraint on the rank is present. Notice that deep matrix factorization is a generalization of the shallow matrix factorization setup investigated by~\cite{gunasekarimplicit}, which corresponds to the case where $k=2$. 
They observed that depth enhances recovery performances. This led them to study the dynamics in optimization and they found out that gradient descent promotes sparsity of singular values of $\bm{W}$, as summarized in the theorem below.
\begin{theorem}[\citealp{arora2019implicit}]
	\label{th:dy_deepm}
	For depth $k\geq 2$, for any $r=1,\ldots,\min(d_1,d_2)$,
	\begin{equation}
		\label{eq:dy_deepm}
	\frac{d}{dt} \sigma_r(t) = k \alpha_r(t) \cdot \left(\sigma_r(t)\right)^{2-\frac{2}{k}},
	\end{equation}
where $\alpha_r(t) = \left\langle - \nabla \mathcal{L}(\bm{W}(t)), \bm{u}_r(t) \bm{v}_r^\top(t) \right\rangle$, $\sigma_r(t)$ is the $r$-th singular value of $\bm{W}$ at time $t\geq 0$, and $\bm{u}_r(t)$ and $\bm{v}_r(t)$ are its $r$-th singular vectors.
\end{theorem}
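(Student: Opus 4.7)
The plan is to derive the evolution of the end-to-end matrix $\bm{W}(t)=\bm{W}_k(t)\cdots\bm{W}_1(t)$ from the gradient flow on the individual factors, and then project that evolution onto the singular directions $\bm{u}_r(t)\bm{v}_r^\top(t)$. First I would write the gradient flow on each factor. By the chain rule applied to $\mathcal{L}(\bm{W}_k\cdots\bm{W}_1)$, one has
\begin{equation*}
\tfrac{d}{dt}\bm{W}_i(t)=-\bm{W}_{i+1}^\top(t)\cdots\bm{W}_k^\top(t)\,\nabla\mathcal{L}(\bm{W}(t))\,\bm{W}_1^\top(t)\cdots\bm{W}_{i-1}^\top(t).
\end{equation*}
Differentiating the product $\bm{W}(t)=\prod_{i=k}^{1}\bm{W}_i(t)$ with respect to $t$ and substituting these expressions yields $\dot{\bm{W}}$ as a sum of $k$ matrix sandwiches around $\nabla\mathcal{L}(\bm{W})$.

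The second step is the crucial one: I would invoke (or establish) the \emph{balancedness} conservation law $\bm{W}_{i+1}^\top(t)\bm{W}_{i+1}(t)=\bm{W}_i(t)\bm{W}_i^\top(t)$ for all $i$ and all $t\geq 0$. This identity is preserved by the gradient flow whenever it holds at initialization; in the small/balanced initialization regime considered in the paper it is therefore available throughout training. Together with the polar/SVD representation, balancedness lets one identify the factors (up to the relevant orthogonal freedom) with fractional powers of $\bm{W}\bm{W}^\top$ and $\bm{W}^\top\bm{W}$, collapsing the sum above into the compact form
\begin{equation*}
\tfrac{d}{dt}\bm{W}(t)=-\sum_{j=1}^{k}\bigl[\bm{W}(t)\bm{W}^\top(t)\bigr]^{\frac{j-1}{k}}\nabla\mathcal{L}(\bm{W}(t))\bigl[\bm{W}^\top(t)\bm{W}(t)\bigr]^{\frac{k-j}{k}}.
\end{equation*}

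Next I would move from $\bm{W}(t)$ to its singular values by using the standard identity $\dot{\sigma}_r(t)=\bm{u}_r^\top(t)\,\dot{\bm{W}}(t)\,\bm{v}_r(t)$. Substituting the SVD $\bm{W}=\bm{U}\bm{\Sigma}\bm{V}^\top$ into the bracketed matrix powers gives $[\bm{W}\bm{W}^\top]^{(j-1)/k}=\bm{U}\bm{\Sigma}^{2(j-1)/k}\bm{U}^\top$ and $[\bm{W}^\top\bm{W}]^{(k-j)/k}=\bm{V}\bm{\Sigma}^{2(k-j)/k}\bm{V}^\top$. Contracting with $\bm{u}_r$ on the left and $\bm{v}_r$ on the right picks out the $r$-th diagonal entries of the diagonal powers, producing an overall scalar factor $\sigma_r^{\,2(j-1)/k}\cdot\sigma_r^{\,2(k-j)/k}=\sigma_r^{\,2-2/k}$ that is the same for every index $j$. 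Summing the $k$ identical terms and using the definition of $\alpha_r(t)$ yields exactly \eqref{eq:dy_deepm}.

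The step I expect to be the main obstacle is the balancedness argument and its technical status: for a finite-dimensional deterministic gradient flow one can differentiate $\bm{W}_{i+1}^\top\bm{W}_{i+1}-\bm{W}_i\bm{W}_i^\top$ in time and check it vanishes, but one also has to handle the non-uniqueness of the SVD when singular values collide (so that $\bm{u}_r(t),\bm{v}_r(t)$ remain smooth almost everywhere) and the potential degeneracy at $\sigma_r=0$. For $k\geq 2$ the exponent $2-2/k$ lies in $[1,2)$, so the right-hand side of \eqref{eq:dy_deepm} remains well defined and Lipschitz near zero, which is what ultimately legitimizes the clean ODE statement.
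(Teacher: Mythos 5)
Your proposal is correct and follows exactly the argument the paper relies on: the balancedness conservation law $\bm{W}_{i+1}^\top\bm{W}_{i+1}=\bm{W}_i\bm{W}_i^\top$ preserved under gradient flow, the collapse of $\dot{\bm{W}}$ into a sum of fractional powers of $\bm{W}\bm{W}^\top$ and $\bm{W}^\top\bm{W}$ sandwiching $-\nabla\mathcal{L}$, and the projection $\dot{\sigma}_r=\bm{u}_r^\top\dot{\bm{W}}\bm{v}_r$ yielding the factor $\sigma_r^{2-2/k}$. This is the same route the paper takes when it adapts this result to the tensor setting in Lemma~\ref{lem:sv}, so nothing further is needed.
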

\citet{razin2021implicit} extended this analysis to CP tensor factorization (see Eq.~\ref{eq:cp}) and showed the following result.
\begin{theorem}[\citealp{razin2021implicit}]
	\label{th:dy_cp}
	Under certain assumptions, for any $r = 1,\ldots,R$,
	\begin{equation}
		\label{eq:dy_cp}
		\frac{d}{dt}\left\|\bigotimes_{n=1}^N  \bm{w}_r^n(t)\right\|=N \gamma_r(t) \cdot \left\|\bigotimes_{n=1}^N \bm{w}_r^n(t)\right\|^{2-\frac{2}{N}},  
	\end{equation}
where 
$\gamma_r(t) = \left\langle -\nabla \mathcal{L}(\mathcal{W}(t)),\underset{n=1}{\overset{N}\bigotimes}  \frac{\bm{w}_r^{n}(t)}{\|\bm{w}_r^{n}(t)\|}\right\rangle$, $N$ is the order of the tensor $\mathcal{W}$ to be learned, and $\bm{w}_r^n$ is the vector  of the $n$-th mode and the $r$-th block of the CP decomposition.
\end{theorem}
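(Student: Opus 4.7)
The plan is to derive the dynamics of each block-mode norm $\|\bm{w}_r^n(t)\|$ individually, establish a conservation law that equalises these norms across modes, and then pass to the block outer-product norm via the product rule. Let $\bm{g}_r^n(t) := -\partial\mathcal{L}(\mathcal{W}(t))/\partial\bm{w}_r^n \in \mathbb{R}^{d_n}$ denote the per-mode negative gradient, which is obtained by contracting $-\nabla\mathcal{L}(\mathcal{W}(t))$ with $\bigotimes_{n'\neq n}\bm{w}_r^{n'}(t)$ in all modes except the $n$-th. A direct expansion of the tensor inner product yields the key identity
\begin{equation*}
\langle \bm{g}_r^n(t),\, \bm{w}_r^n(t) \rangle = \Bigl\langle -\nabla\mathcal{L}(\mathcal{W}(t)),\, \bigotimes_{n'=1}^N \bm{w}_r^{n'}(t) \Bigr\rangle = \gamma_r(t)\, \prod_{n'=1}^N \|\bm{w}_r^{n'}(t)\|,
\end{equation*}
which, crucially, does not depend on $n$. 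This single observation drives everything else.

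Under the gradient flow $\dot{\bm{w}}_r^n = \bm{g}_r^n$ we have $\tfrac{1}{2}\tfrac{d}{dt}\|\bm{w}_r^n(t)\|^2 = \langle \bm{w}_r^n(t),\, \bm{g}_r^n(t)\rangle$; since the right-hand side is independent of $n$, the pairwise differences $\|\bm{w}_r^n(t)\|^2 - \|\bm{w}_r^{n'}(t)\|^2$ are conserved by the flow for every $n,n'$. The \emph{certain assumptions} alluded to in Theorem~\ref{th:dy_cp} can therefore be enforced once and for all by initialising with equal per-mode norms $\|\bm{w}_r^1(0)\| = \cdots = \|\bm{w}_r^N(0)\|$, which propagates to $\|\bm{w}_r^1(t)\| = \cdots = \|\bm{w}_r^N(t)\| =: \beta_r(t)$ for all $t \geq 0$ and all $r$.

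With this equalisation in hand the block outer-product norm is simply $\rho_r(t) := \|\bigotimes_n \bm{w}_r^n(t)\| = \prod_n \|\bm{w}_r^n(t)\| = \beta_r(t)^N$. Applying the product rule,
\begin{equation*}
\frac{d}{dt}\rho_r(t) = \sum_{n=1}^N \frac{\langle \bm{w}_r^n(t),\, \bm{g}_r^n(t)\rangle}{\|\bm{w}_r^n(t)\|}\prod_{n'\neq n}\|\bm{w}_r^{n'}(t)\| = \gamma_r(t) \sum_{n=1}^N \prod_{n'\neq n}\|\bm{w}_r^{n'}(t)\|^2 = N\,\gamma_r(t)\,\beta_r(t)^{2(N-1)},
\end{equation*}
and rewriting $\beta_r^{2(N-1)} = (\beta_r^N)^{(2N-2)/N} = \rho_r^{2-2/N}$ gives exactly the claimed identity. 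I do not expect any step here to be particularly delicate; the only non-routine point is the norm-equalisation at initialisation, which is natural in light of the $n$-independence of $\langle \bm{w}_r^n, \bm{g}_r^n\rangle$ but must still be imposed as a hypothesis because nothing in the dynamics would by itself pull asymmetric initial norms towards each other.
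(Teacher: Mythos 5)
Your proposal is correct and mirrors the argument the paper gives for the deep generalization (Theorem~\ref{th:dy_deepcp}(i), proved in Appendix~\ref{app:proofs}), specialized to zero depth: the $n$-independence of $\langle \bm{g}_r^n, \bm{w}_r^n\rangle$ yields the conservation law, balanced initialization then equalizes the per-mode norms, and the product rule collapses the block-norm derivative to $N\gamma_r\|\bigotimes_n \bm{w}_r^n\|^{2-2/N}$. The paper routes the final step through matching upper and lower bounds parameterized by the unbalancedness magnitude $\varepsilon_1(t)$, which degenerate to equality once $\varepsilon_1=0$, but that is presentational---the substance is identical to your direct substitution.
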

This shows that training a CP tensor factorization
via gradient descent with small learning rate and near-zero
initialization tends to produce tensors with low canonical rank.
Note that the result in Eq.~\eqref{eq:dy_deepm} depends on the depth of the factorization $k$, while the one in Eq.~\eqref{eq:dy_cp} depends on the order of the tensor $N$. In both cases the impact of implicit regularization \textit{grows} at most \textit{quadratically}, with either  $k$ or $N$. 

As far as we are aware, the only work on implicit regularization in deep tensor factorization appears in~\citet{milanesi2021implicit}, in which the authors considered Tucker and Tensor Train (TT) decompositions and observed that, even in the case where the rank is not constrained, only a small number of higher-order singular values~\citep{de2000multilinear} and TT singular values~\citep{oseledets2011tensor} are retained by a gradient-based neural network.\footnote{After this paper was submitted, a paper by \citet{razin2022implicit} was released on arXiv, which studied implicit regularization in hierarchical tensor factorization.} However, no
theoretical justification is given there.

\section{Main Results}
\label{sec:main_results}
We  now present our main results, to be proved in Section~\ref{sec:proofs}.
Following the idea of~\citet{razin2021implicit}, we provide a dynamical characterization of the trajectories of the norm of each block of the deep CP factorization.
To proceed we need the following definition.
\begin{definition}
\label{def:doubly}
The unbalancedness magnitude at time $t\geq 0$ of the weight vectors and matrices of the CP factorization in Eq.~\eqref{eq:deepcp} is defined as  :
	\begin{equation*}
		\varepsilon(t)=\!\!\!\displaystyle\max\limits_{\substack{r\in\{1,\hdots,R\}, (n,m)\in\{1,\ldots,N\}^2\\ i\in \{1,\ldots,k_m\}}}\!\Big |\|\bm{w}_r^n(t)\|^2-\|\bm{A}_i^{m,r}\!(t)\|^2\Big|.
	\end{equation*}
\end{definition}
Note that this notion of \textit{unbalancedness magnitude}  of the deep CP factorization is inspired from~\citet{razin2021implicit}, where the unbalancedness magnitude~\citep{du2018algorithmic} of the weight vectors of the CP decomposition was introduced. 

Note that $\varepsilon(0)$ is per definition purely determined by the initialization.
We will show later that $\varepsilon(t)$ is constant during the gradient descent optimization, which is crucial to show our first main result.
\begin{theorem}
	\label{th:dy_deepcp}
Assume that $\varepsilon(0)=0$. Then,  for any  $r\in\{1,\hdots,R\}$ and time $t\geq 0$ at which $\left\| \underset{n=1}{\overset{N}\bigotimes} \underset{i=1}{\overset{k_n}\prod}
{\bm{A}}_i^{n,r}(t){\bm{w}}_r^{n}(t)\right\|\neq 0$:
\begin{enumerate}[label=(\roman*),leftmargin=0.6cm]
\itemsep=0.2cm
\item  The weight vectors of the CP factorization in Eq.~\eqref{eq:deepcp} evolve according to
\begin{equation*}
	\!\!\frac{d}{dt}\left\| \bigotimes_{n=1}^N \bm{w}_r^n(t)\right\|\!=\! N \delta_r(t)\left\| \bigotimes_{n=1}^N \bm{w}_r^n(t)\right\|^{2-\frac{2}{N}+\frac{k_1+\ldots +k_N}{N}\!\!\!}, 
\end{equation*}
where
$\delta_r(t) :=  \left\langle -\nabla \mathcal{L}(\mathcal{W}(t)), \underset{n=1}{\overset{N}\bigotimes} 
\underset{i=1}{\overset{k_n}\prod}
\widehat{\bm{A}}_i^{n,r}(t)\widehat{\bm{w}}_r^{n}(t)\right\rangle$, $\widehat{\bm{w}}_r^{n}(t):=\frac{\bm{w}_r^{n}}{\|\bm{w}_r^{n}(t)\|}$ and $\widehat{\bm{A}}_i^{n,r}(t):=\frac{\bm{A}_i^{n,r}(t)}{\|\bm{A}_i^{n,r}(t)\|}$.
\item If in addition $\{\bm{A}_i^{n,r}(0)\}_{r=1\ n=1\ i=1}^{R\ \ \ \ N\ \ \ \ \ k_n}$ are rank-one matrices satisfying $$\bm{A}_i^{n,r}(0)^\top \bm{A}_i^{n,r}(0) = \bm{A}_{i+1}^{n,r}(0) \bm{A}_{i+1}^{n,r}(0)^\top,$$ for all $i\in\{1,\ldots,k_n-1\}$ with $k_n\geq 2$ and $n \in \{1,\ldots,N\}$, then
\begin{equation*}
    \delta_r(t) \!=  \!\left\langle \!\!-\nabla \mathcal{L}(\mathcal{W}(t)), \underset{n=1}{\overset{N}\bigotimes} \frac{\underset{i=1}{\overset{k_n}\prod}{\bm{A}}_i^{n,r}(t){\bm{w}}_r^{n}(t)} {\left\|\underset{i=1}{\overset{k_n}\prod}{\bm{A}}_i^{n,r}(t){\bm{w}}_r^{n}(t)\right\|}\right\rangle  \!\zeta_r(t),
\end{equation*}
where 
$\zeta_r(t) :=\left|  \left\langle \underset{n=1}{\overset{N}\bigotimes}{\tilde{\bm{v}}}_r^{n}(t) , \underset{n=1}{\overset{N}\bigotimes} \widehat{\bm{w}}_r^{n}(t) \right\rangle \right|$ and ${\tilde{\bm{v}}}_r^{n}(t)$
is the first right singular vector of %
$\bm{A}_{k_n}^{n,r}(t)$.

\end{enumerate}

\end{theorem}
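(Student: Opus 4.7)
The plan is to work in the gradient flow regime and exploit two invariants: a cross-mode conservation law forcing all per-block factor norms to be equal (enough for part (i)), and the preservation of a balanced rank-one structure of the matrix factors along the flow (needed for part (ii), where it lets one telescope $\prod_i \bm{A}_i^{n,r}\bm{w}_r^n$ and factor out $\zeta_r(t)$).

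For part (i), I would first prove conservation of the unbalancedness. Denote the $r$-th block by $\mathcal{B}_r(t) := \bigotimes_{n=1}^N \prod_{i=1}^{k_n}\bm{A}_i^{n,r}(t)\bm{w}_r^n(t)$, and let $\bm{g}_r^n$ be the contraction of $\nabla\mathcal{L}(\mathcal{W})$ against all factors of $\mathcal{B}_r$ other than the $n$-th. The chain rule gives $-\nabla_{\bm{w}_r^n}\Phi = (\prod_i\bm{A}_i^{n,r})^\top\bm{g}_r^n$ and $-\nabla_{\bm{A}_i^{n,r}}\Phi = -\bm{L}_i^\top\bm{g}_r^n\bm{R}_i^\top$, where $\bm{L}_i := \bm{A}_1^{n,r}\cdots\bm{A}_{i-1}^{n,r}$ and $\bm{R}_i := \bm{A}_{i+1}^{n,r}\cdots\bm{A}_{k_n}^{n,r}\bm{w}_r^n$. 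Inner-producting with $\bm{w}_r^n$ and $\bm{A}_i^{n,r}$ respectively and using cyclicity of the trace, both $\frac{d}{dt}\|\bm{w}_r^n\|^2$ and $\frac{d}{dt}\|\bm{A}_i^{n,r}\|^2$ reduce to $-2\langle\nabla\mathcal{L}(\mathcal{W}),\mathcal{B}_r\rangle$ for any $n, i$. This common derivative makes every pairwise difference in $\varepsilon(t)$ constant in time, so $\varepsilon(0)=0$ implies $q_r(t) := \|\bm{w}_r^n(t)\|^2 = \|\bm{A}_i^{m,r}(t)\|^2$ for all valid indices.

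To close part (i), I pull scalars out of $\mathcal{B}_r$: the identity $\prod_i\bm{A}_i^{n,r}\bm{w}_r^n = (\|\bm{w}_r^n\|\prod_i\|\bm{A}_i^{n,r}\|)\prod_i\widehat{\bm{A}}_i^{n,r}\widehat{\bm{w}}_r^n$ together with $\|\bm{w}_r^n\| = \|\bm{A}_i^{n,r}\| = q_r^{1/2}$ yields $\mathcal{B}_r = q_r^{(N+\sum_n k_n)/2}\bigotimes_n\prod_i\widehat{\bm{A}}_i^{n,r}\widehat{\bm{w}}_r^n$. Hence $\dot q_r = 2q_r^{(N+\sum_n k_n)/2}\delta_r$; combined with $\frac{d}{dt}\|\bigotimes_n\bm{w}_r^n\| = \frac{d}{dt}q_r^{N/2} = (N/2)q_r^{N/2-1}\dot q_r$ and the exponent identity $q_r^{(2N-2+\sum_n k_n)/2} = \|\bigotimes_n\bm{w}_r^n\|^{2-2/N+\sum_n k_n/N}$, the claimed ODE follows.

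For part (ii), the extra hypothesis is the classical balanced initialization of \citet{arora2019implicit,du2018algorithmic}, which is preserved by the flow (both sides of $\bm{A}_i^\top\bm{A}_i = \bm{A}_{i+1}\bm{A}_{i+1}^\top$ can be differentiated to coincide). Together with rank-one initialization and $\varepsilon(0)=0$, it forces the ansatz $\bm{A}_i^{n,r}(t) = \sqrt{q_r(t)}\,\bm{u}_i^{n,r}(\bm{v}_i^{n,r})^\top$ with aligned singular vectors $\bm{v}_i^{n,r} = \pm\bm{u}_{i+1}^{n,r}$. Under this ansatz the telescoping chains give $\bm{L}_i^\top\bm{g}_r^n \propto \bm{u}_i^{n,r}$ and $\bm{R}_i \propto \bm{v}_i^{n,r}$, so $\dot{\bm{A}}_i^{n,r}$ is aligned with $\bm{u}_i^{n,r}(\bm{v}_i^{n,r})^\top$; the ansatz is therefore preserved along the flow and only the common scalar factor evolves. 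Consequently, $\prod_i\bm{A}_i^{n,r}\bm{w}_r^n = \pm q_r^{k_n/2}\langle\tilde{\bm{v}}_r^n,\bm{w}_r^n\rangle\bm{u}_1^{n,r}$, where $\tilde{\bm{v}}_r^n = \bm{v}_{k_n}^{n,r}$. Normalizing and comparing with $\prod_i\widehat{\bm{A}}_i^{n,r}\widehat{\bm{w}}_r^n$ produces an extra factor $|\langle\tilde{\bm{v}}_r^n,\widehat{\bm{w}}_r^n\rangle|$ for each mode, which the outer product over $n$ collects into $\zeta_r(t)$; substituting into the expression of $\delta_r$ from part (i) gives the claimed form. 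The hardest step is verifying that the balanced rank-one ansatz is genuinely preserved by the flow, which demands careful bookkeeping of the telescoping partial products on both sides of each $\bm{A}_i^{n,r}$.
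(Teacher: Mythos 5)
Your proposal is correct and follows essentially the same route as the paper: part (i) rests on the same conservation law (all per-block squared norms share the derivative $-2\langle\nabla\mathcal{L}(\mathcal{W}),\mathcal{B}_r\rangle$, so $\varepsilon(0)=0$ propagates), and your direct computation with the common value $q_r(t)$ is a streamlined version of the paper's upper/lower-bound sandwich, which collapses to the same identity once $\varepsilon_1\equiv 0$. For part (ii) you replace the paper's explicit singular-value-dynamics lemma (its Lemma~\ref{lem:sv}, following \citet{arora2018optimization,arora2019implicit}) with an invariant rank-one ansatz; this is the same mechanism, but to be airtight the claim that the ansatz is preserved needs the uniqueness-of-solutions step made explicit (construct the solution inside the manifold and invoke uniqueness of the gradient flow), which is precisely what the paper's singular-value ODE together with the Lipschitz exponent $2(1-1/k_n)\geq 1$ delivers.
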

By Theorem~\ref{th:dy_deepcp}, if  all the depths $k_1, \ldots, k_N$ are equal to the same value, say $k$, we obtain:
\begin{equation}
	\label{eq:dy_deepcp}
\frac{d}{dt}\left\| \bigotimes_{n=1}^N \bm{w}_r^n(t)\right\|=N \delta_r(t)\left\| \bigotimes_{n=1}^N \bm{w}_r^n(t)\right\|^{2-\frac{2}{N}+k}.
\end{equation}
Note that Part (ii) of Theorem~\ref{th:dy_deepcp} allows us to express $\delta_r(t)$ defined in Part (i)  in terms of quantities with norms that do not depend with depths. This is achieved by characterizing the evolution of the singular values of the product of the matrix parameters $\bm{A}_i^{n,r}(t)$ (see Lemma~\ref{lem:sv}).

This shows that the evolution rates of the norm of the blocks of the tensor $\sum_{r=1}^R  \underset{n=1}{\overset{N}\bigotimes} \bm{w}_r^n$ are proportional to their norm raised to the power of $2-\frac{2}{N}+k$. 
This is in line with the observation that CP block norms move faster when large and slower when small, as reported by~\citet{razin2021implicit}. More interestingly, this effect is more pronounced with larger depths.
When the depth $k$ increases, one block $r$, likely the one whose norm is maximum, will see its norm increases significantly faster~(the bigger the value of $k$ the faster) than the norm of all other blocks, up to a stability stage when $\delta_r(t)$ converges towards 0, meaning this block is somehow optimized.     

This sequential block optimization mechanism promotes low-rankness in a greedy fashion where the blocks are selected and optimized one after the other, which  will be confirmed experimentally. Interestingly, similar observations were made in~\citet{arora2019implicit,li2021towards,ge2021understanding}.
An interesting property of deep CP factorization  that our theoretical analysis reveals, lies in that the effect of the depth of the factorization on the implicit regularization is \textit{polynomial}, while it is \textit{quadratic} in deep matrix and \enquote*{shallow} CP decomposition, as shown in Theorems~\ref{th:dy_deepm} and~\ref{th:dy_cp}. 

One consequence of the greedy sequential optimization of the blocks is that when a sufficient number of blocks of $\sum_{r=1}^R  \underset{n=1}{\overset{N}\bigotimes} \bm{w}_r^n$ are effectively used (having a norm significantly different from zero) and have been optimized the other block remain ignored with a very small norm. Also, the number of effective blocks quickly decreases with increasing depth $k$.

Yet, Theorem~\ref{th:dy_deepcp} does not explicitly specify the effect of the implicit regularization on the learned tensor $\mathcal{W}$ by the deep CP factorization.
The following theorem shows that the dynamical characterization provided above would favor selecting only a few number of the blocks of $\mathcal{W}$, promoting low canonical rank solutions.
\begin{theorem}
\label{th:rank_deepcp}
Assume that $\varepsilon(0)=0$. Then,  for any  time $t\geq 0$ of the optimization  of the CP factorization in Eq.~\eqref{eq:deepcp},  the following inequality holds for all $r\in\{1,\ldots,R\}$:
\begin{equation*}
\left\| \underset{n=1}{\overset{N}\bigotimes} 
\underset{i=1}{\overset{k_n}\prod}
{\bm{A}}_i^{n,r}(t){\bm{w}}_r^{n}(t)\right\| \leq \left \|\bigotimes_{n=1}^N  \bm{w}_r^n(t)\right\|^{1+\frac{k_1+\ldots +k_N}{N}}.
\end{equation*}
\end{theorem}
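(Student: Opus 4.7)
The plan is to exploit the hypothesis $\varepsilon(0)=0$ by combining it with the conservation of the unbalancedness magnitude along the gradient flow (the same invariant used in the proof of Theorem~\ref{th:dy_deepcp}), which collapses every Frobenius norm of a factor associated with block $r$ to a single scalar $c_r(t)$; after that, the statement reduces to standard multiplicativity of norms under outer and matrix products.

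First I would invoke, as a preliminary lemma, the fact that $\varepsilon(t)=\varepsilon(0)=0$ for all $t\geq 0$ along the gradient flow, so that for every $r\in\{1,\ldots,R\}$ there exists $c_r(t)\geq 0$ with $\|\bm{w}_r^n(t)\|=\|\bm{A}_i^{m,r}(t)\|=c_r(t)$ for all admissible $n,m,i$. Second, using the identity $\|\bm{u}_1\otimes\cdots\otimes\bm{u}_N\|=\prod_{n=1}^N\|\bm{u}_n\|$ for outer products of vectors, I would rewrite
$$\left\|\bigotimes_{n=1}^N\prod_{i=1}^{k_n}\bm{A}_i^{n,r}(t)\,\bm{w}_r^n(t)\right\|=\prod_{n=1}^N\left\|\prod_{i=1}^{k_n}\bm{A}_i^{n,r}(t)\,\bm{w}_r^n(t)\right\|,$$
and apply the chain $\|\bm{A}\bm{v}\|\leq\|\bm{A}\|_{\mathrm{op}}\|\bm{v}\|\leq\|\bm{A}\|_F\|\bm{v}\|$ iteratively along each mode to obtain
$$\left\|\prod_{i=1}^{k_n}\bm{A}_i^{n,r}(t)\,\bm{w}_r^n(t)\right\|\leq\prod_{i=1}^{k_n}\|\bm{A}_i^{n,r}(t)\|_F\cdot\|\bm{w}_r^n(t)\|=c_r(t)^{k_n+1}.$$

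Taking the product over $n$ then bounds the left-hand side by $c_r(t)^{N+k_1+\cdots+k_N}$. A direct computation of the right-hand side gives $\prod_{n=1}^N\|\bm{w}_r^n(t)\|^{1+(k_1+\cdots+k_N)/N}=c_r(t)^{N+k_1+\cdots+k_N}$, matching this upper bound exactly; the inequality thus holds, and in fact becomes an equality in the perfectly balanced regime, which is a useful sanity check on the exponent in the statement. The only nontrivial ingredient is the invariance of $\varepsilon(t)$, obtained via the standard balancedness calculation of differentiating $\|\bm{w}_r^n\|^2$ and $\|\bm{A}_i^{m,r}\|^2$ along the flow and showing they share a common symmetric expression, in the spirit of~\citet{du2018algorithmic}; this is the main technical step, but one that the paper already announces before Theorem~\ref{th:dy_deepcp}. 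Once that invariant is in hand, Theorem~\ref{th:rank_deepcp} is an elementary application of submultiplicativity, with no finer spectral structure of the $\bm{A}_i^{n,r}(t)$ needed.
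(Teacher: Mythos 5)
Your proof is correct and follows essentially the same route as the paper's: conservation of the unbalancedness magnitude (Lemma~\ref{lem:dy}) together with $\varepsilon(0)=0$ forces all factor norms of block $r$ to a common value $c_r(t)$ (which the paper writes as $\|\bigotimes_{m=1}^{N}\bm{w}_r^m(t)\|^{1/N}$), and the bound then follows from multiplicativity of the outer-product norm and submultiplicativity of the Frobenius norm, exactly as in Section~\ref{sec:proofs}. The only (immaterial) slip is your aside that the inequality \emph{becomes an equality} in the balanced regime: the step $\|\bm{A}\bm{v}\|\leq\|\bm{A}\|_F\|\bm{v}\|$ is generally strict even then, so only your upper bound coincides with the right-hand side, not the left-hand side itself.
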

If  $k_1 = \ldots = k_N$, let us say $k$,  then:
\begin{equation}
	\label{eq:rank_deepcp}
\left\| \underset{n=1}{\overset{N}\bigotimes} 
\underset{i=1}{\overset{k}\prod}
{\bm{A}}_i^{n,r}(t){\bm{w}}_r^{n}(t)\right\| \leq \left \|\bigotimes_{n=1}^N  \bm{w}_r^n(t)\right\|^{1+k}.
\end{equation}
Let us denote by $N_1(t)$ the number of blocks with non zero norm of the factorized tensor $\mathcal{W} =\sum_{r=1}^R  \bigotimes_{n=1}^N  \prod_{i=1}^{k_n}  \bm{A}_i^{n,r} \bm{w}_r^n$ at iteration $t$, and let $N_2(t)$ be the number of blocks with non zero norm of the factorized tensor $\sum_{r=1}^R  \bigotimes_{n=1}^N  \bm{w}_r^n$  at iteration $t$.
 Theorem~\ref{th:rank_deepcp} shows that the term $\left \|\bigotimes_{n=1}^N  \bm{w}_r^n(t)\right\|$ controls the norm of the $r$-th block of the deep CP factorization. If it is close to zero, then $\left\| \underset{n=1}{\overset{N}\bigotimes} 
\underset{i=1}{\overset{k_n}\prod}
{\bm{A}}_i^{n,r}(t){\bm{w}}_r^{n}(t)\right\|$ is close to zero as well. 
Then, whatever the iteration $t$, $N_1(t) \leq N_2(t)$. Considering that $\mathrm{rank}(\mathcal{W}) \leq N_1(t)$, we can conclude that the rank of the learned  tensor $\mathcal{W}$ is bounded by $N_2(t)$.
Putting all together Theorem~\ref{th:dy_deepcp}  shows that the depth $k$ of the factorization ensures the convergence of the tensor $\sum_{r=1}^R  \bigotimes_{n=1}^N  \bm{w}_r^n$ towards a tensor with a small $N_2(t)$ value,  hence leads the deep CP factorization to converge to a tensor $\mathcal{W}$ that has a low canonical rank.

\section{Proof Overview}
\label{sec:proofs}

To prove Theorem~\ref{th:dy_deepcp}, we need the following result.
\begin{lemma}
	\label{lem:dy}
For all $r\in\{1,\ldots,R\}$, $n,m\in\{1,\ldots, N\}$, $i\in\{1,\ldots,k_n\}$ and $j\in\{1,\ldots,k_m\}$, the followings hold $\forall t \geq 0$,
\begin{enumerate}[(i)]
	\item $\|\bm{A}_i^{n,r}\!(t)\|^2\!-\!\|\bm{A}_j^{m,r}\!(t)\|^2=\|\bm{A}_i^{n,r}\!(0)\|^2-\|\bm{A}_j^{m,r}\!(0)\|^2$,
	\item $\|\bm{w}_r^n(t)\|^2-\|\bm{w}_r^m(t)\|^2=\|\bm{w}_r^n(0)\|^2-\|\bm{w}_r^m(0)\|^2$,
	\item $ \|\bm{w}_r^n(t)\|^2-\|\bm{A}_i^{n,r}(t)\|^2=\|\bm{w}_r^n(0)\|^2-\|\bm{A}_i^{n,r}(0)\|^2$. 
\end{enumerate}
\end{lemma}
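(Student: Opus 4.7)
The plan is to exploit the standard ``balancedness'' phenomenon for gradient flow through multi-layer products: all the relevant squared-norm differences vanish under the flow because their time derivatives collapse to the same scalar quantity, independent of the parameter in question. Concretely, I will differentiate each squared norm, use the multi-linear chain rule to express the inner product $\langle \text{parameter}, \partial_{\text{parameter}} \mathcal{L}\rangle$ in a compact form, and check that this form is the same for every parameter $\bm{A}_i^{n,r}$ and $\bm{w}_r^n$ (with $n$, $i$ varying, $r$ fixed). Integrating in $t$ then yields the three conservation laws.

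The first step is to introduce, for each mode $n$ and block $r$, the ``effective'' vector $\bm{p}_r^n(t) := \prod_{i=1}^{k_n} \bm{A}_i^{n,r}(t)\,\bm{w}_r^n(t)$ and the ``pre-/post-factor'' matrices $\bm{L}_i^{n,r} := \prod_{j<i}\bm{A}_j^{n,r}$ and $\bm{R}_i^{n,r} := \prod_{j>i}\bm{A}_j^{n,r}$, together with $\bm{u}_r^n := \nabla_{\bm{p}_r^n}\mathcal{L}(\mathcal{W})$. A direct chain rule gives $\partial_{\bm{A}_i^{n,r}}\mathcal{L} = (\bm{L}_i^{n,r})^{\!\top} \bm{u}_r^n\,(\bm{R}_i^{n,r}\bm{w}_r^n)^{\!\top}$ and $\partial_{\bm{w}_r^n}\mathcal{L} = (\prod_{i=1}^{k_n}\bm{A}_i^{n,r})^{\!\top}\bm{u}_r^n$. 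Using the identity $\langle A, uv^\top\rangle = u^\top A v$ and the factorization $\bm{p}_r^n = \bm{L}_i^{n,r}\bm{A}_i^{n,r}\bm{R}_i^{n,r}\bm{w}_r^n$, I compute
\begin{equation*}
\bigl\langle \bm{A}_i^{n,r}, \partial_{\bm{A}_i^{n,r}}\mathcal{L}\bigr\rangle
= \bigl\langle \bm{w}_r^n, \partial_{\bm{w}_r^n}\mathcal{L}\bigr\rangle
= \langle \bm{u}_r^n, \bm{p}_r^n\rangle.
\end{equation*}
Crucially, the resulting quantity does not depend on $i$, and matches the corresponding inner product for $\bm{w}_r^n$.

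The second step is to show that $\langle \bm{u}_r^n, \bm{p}_r^n\rangle$ is in fact also independent of $n$. Since $\mathcal{W} = \sum_{s=1}^R \bigotimes_{n'=1}^N \bm{p}_s^{n'}$, the chain rule identifies $\bm{u}_r^n$ as the mode-$n$ slice of $\nabla\mathcal{L}(\mathcal{W})$ contracted against $\{\bm{p}_r^{n'}\}_{n'\neq n}$. Contracting once more with $\bm{p}_r^n$ fills the remaining mode and yields
\begin{equation*}
\langle \bm{u}_r^n, \bm{p}_r^n\rangle
= \Bigl\langle \nabla\mathcal{L}(\mathcal{W}(t)),\, \bigotimes_{n'=1}^N \bm{p}_r^{n'}(t)\Bigr\rangle,
\end{equation*}
which is manifestly symmetric in $n$. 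Combined with the previous display, this means that for the gradient flow $\tfrac{d}{dt}\|\bm{A}_i^{n,r}\|^2 = -2\langle \bm{A}_i^{n,r}, \partial_{\bm{A}_i^{n,r}}\mathcal{L}\rangle$ and $\tfrac{d}{dt}\|\bm{w}_r^n\|^2 = -2\langle \bm{w}_r^n, \partial_{\bm{w}_r^n}\mathcal{L}\rangle$, every such derivative equals the \emph{same} scalar $-2\langle \nabla\mathcal{L}(\mathcal{W}),\bigotimes_{n'} \bm{p}_r^{n'}\rangle$, depending only on $r$ and $t$.

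The final step is integration: subtracting any two of these identical derivatives gives $\tfrac{d}{dt}\bigl(\|\bm{A}_i^{n,r}\|^2-\|\bm{A}_j^{m,r}\|^2\bigr)=0$, $\tfrac{d}{dt}\bigl(\|\bm{w}_r^n\|^2-\|\bm{w}_r^m\|^2\bigr)=0$, and $\tfrac{d}{dt}\bigl(\|\bm{w}_r^n\|^2-\|\bm{A}_i^{n,r}\|^2\bigr)=0$, which after integrating from $0$ to $t$ yield the three claimed conservation identities. There is no serious obstacle: the only place where care is needed is the bookkeeping in the chain-rule computation of $\partial_{\bm{A}_i^{n,r}}\mathcal{L}$ as an outer product of the ``left side'' gradient and the ``right side'' input, and the verification that taking the inner product with $\bm{A}_i^{n,r}$ telescopes the left and right products back into $\bm{p}_r^n$ so that the index $i$ disappears. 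Once that collapse is secured, the three statements follow uniformly from one and the same derivative computation.
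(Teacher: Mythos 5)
Your proposal is correct and matches the paper's argument in substance: both compute the time derivative of each squared norm via the chain rule, use the mode-$n$ matricization identity to express every such derivative as the single scalar $-2\langle\nabla\mathcal{L}(\mathcal{W}),\bigotimes_{n'}\prod_i\bm{A}_i^{n',r}\bm{w}_r^{n'}\rangle$ independent of $n$ and $i$, and integrate. Your introduction of $\bm{p}_r^n$ and the intermediate gradient $\bm{u}_r^n=\nabla_{\bm{p}_r^n}\mathcal{L}$ is a cleaner bookkeeping of the same telescoping the paper performs explicitly via Taylor expansion and its Lemma A.1.
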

The proof of Lemma~\ref{lem:dy} is in Appendix~\ref{app:proofs}. The essential idea of the proof comes from~\citet{razin2021implicit}.
The key steps are as follows. We first show that $\displaystyle\frac{d}{dt}\|\bm{A}_i^{n,r}(t)\|^2$ is independent of $n$ and $i$. So $\forall (n,m)\in \llbracket1,N\rrbracket^2$ and $\forall (i,j)\in \llbracket1,k_n\rrbracket\times \llbracket1,k_m\rrbracket$, the derivatives of  $\|\bm{A}_i^{n,r}(t)\|^2$ and $\|\bm{A}_j^{m,r}(t)\|^2$ are equal, which means that $\|\bm{A}_i^{n,r}(t)\|^2-\|\bm{A}_j^{m,r}(t)\|^2$ does not vary with $t$. 
The assertion $(ii)$  is shown by the same arguments as above. 
The proof of~$(iii)$ is based on the observation that, $\forall n,m\in \llbracket1,N\rrbracket$ and $\forall i\in \llbracket1,k_n\rrbracket$,
$\displaystyle\frac{d}{dt}\|\bm{w}_r^n(t)\|^2=\frac{d}{dt}\|\bm{A}_i^{n,r}(t)\|^2$.

We now present a sketch of the proof of Theorem~\ref{th:dy_deepcp}. Full
details of the proof is provided in Appendix~\ref{app:proofs}. The ideas of the proof are similar to the ideas in~\citet{razin2021implicit}, with the difference that the extension to deep CP factorization will result in some technical complications due to the presence of the weight matrices $\bm{A}_i^{n,r}(t)$. 
\paragraph{Sketch of the proof of Theorem~\ref{th:dy_deepcp}.}
The main step of the proof of part (i) of the theorem is to show that 
\begin{equation}
	\label{eq:sktech_dy}
	\frac{d}{dt}\!\!\left\|\underset{n=1}{\overset{N}\bigotimes}\bm{w}_r^n(t)\right\|\!=\! N \delta_r(t)\!\left\|\underset{n=1}{\overset{N}\bigotimes}\bm{w}_r^n(t)\right\|^{2-\frac{2}{N}}\!\!\!\!\prod_{n=1}^N \!\prod_{i=1}^{k_{n}}\!\|\bm{A}_i^{n,r}\!(t)\|.
\end{equation}
This result is obtained by deriving  upper and lower bounds of the first term in~\eqref{eq:sktech_dy}, which converges to the same value when the unbalancedness magnitude assumption is satisfied. 
Then, using Lemma~\ref{lem:dy} and the assumption $\epsilon(0)=0$, we prove that
$
\|\bm{A}_i^{n,r}(t)\|=\left\|\underset{m=1}{\overset{N}\bigotimes}\bm{w}_r^m(t)\right\|^{\frac{1}{N}}
$.
Plugging the result into Eq.~\eqref{eq:sktech_dy} completes the proof.

To prove part (ii) of the theorem, we state  Lemma~\ref{lem:sv} which characterizes the evolution of the singular values of the product matrix $\displaystyle\prod_{i=1}^{k_{n}}\bm{A}_i^{n,r}(t)$. The proof of this lemma is inspired by Theorem 3 of~\citet{arora2019implicit}.
This allows us to express $\left\|\underset{i=1}{\overset{k_n}\prod}{\bm{A}}_i^{n,r}(t){\bm{w}}_r^{n}(t)\right\|$ in terms of $\left(\underset{i=1}{\overset{k_n}\prod}\|{\bm{A}}_i^{n,r}(t)\|\right) \|{\bm{w}}_r^{n}(t)\|$.

\paragraph{Proof of Theorem~\ref{th:rank_deepcp}.}
Let us first recall that $
\|\bm{A}_i^{n,r}(t)\|=\left\|\underset{m=1}{\overset{N}\bigotimes}\bm{w}_r^m(t)\right\|^{\frac{1}{N}}$, when $\epsilon(0)=0$. We have,
\begin{align*}
	&\left\|\underset{n=1}{\overset{N}\bigotimes}\prod_{i=1}^{k_{n}}\bm{A}_i^{n,r}(t)\bm{w}_r^{n}(t)\right\|
	=\prod_{n=1}^N\left\|\prod_{i=1}^{k_{n}}\bm{A}_i^{n,r}(t)\bm{w}_r^{n}(t)\right\|\\
	&\qquad\qquad \leq \prod_{n=1}^N\left\|\prod_{i=1}^{k_{n}}\bm{A}_i^{n,r}(t)\right\| \left\|\bm{w}_r^{n}(t)\right\|\\
	&\qquad\qquad \leq\prod_{n=1}^N\left(\prod_{i=1}^{k_{n}}\|\bm{A}_i^{n,r}(t)\|\right)\left\|\bm{w}_r^{n}(t)\right\| \\
	&\qquad\qquad =\prod_{n=1}^N\left(\prod_{i=1}^{k_{n}}\left\|\underset{m=1}{\overset{N}\bigotimes}\bm{w}_r^{m}(t)\right\|^{\frac{1}{N}}\right)\prod_{n=1}^N\left\|\bm{w}_r^{n}(t)\right\| \\
	&\qquad\qquad = \left\|\underset{n=1}{\overset{N}\bigotimes}\bm{w}_r^{n}(t)\right\|^{\frac{k_1+\ldots+k_N}{N}}  \left\|\underset{n=1}{\overset{N}\bigotimes}\bm{w}_r^{n}(t)\right\| \\
	&\qquad\qquad = \left\|\underset{n=1}{\overset{N}\bigotimes}\bm{w}_r^{n}(t)\right\|^{1+\frac{k_1+\ldots+k_N}{N}}.
\end{align*}

\section{Experimental Analysis}
\label{sec:xp}
We present here an experimental analysis  that helps understanding our main theoretical results. We first detail the experimental settings and investigate the main trends that we observed during learning. %
We then report a more extensive analysis of the low rank inducing feature of deep over-parameterized tensor optimization.  
Finally, we explore how and when depth may help improving loss minimization.

\subsection{Experimental Settings}
We focus on tensor completion task: given a partially observed tensor $\mathcal{A}$, we learn a model to match inputs indices~(tuples of size $N$, the order of $\mathcal{A}$) to the observed values. We generated synthetic data in order to analyze and control 
the phenomena of implicit regularization in tensor factorization.
	
\textbf{Synthetic data.} We generated a $10\times 10\times 10\times 10$ tensor with CP-rank equal to $5$ and entries sampled from a centered and reduced normal distribution. From this complete tensor, we randomly split the set of (indices, values) to build training and testing sets. In the following, we comment experiments for various ratio of observed values (from $25\%$ to $10\%$).

\textbf{Model initialization.}
As we just said, entries of ${\bm{w}}_r^{n}$ are sampled from a centered normal distribution with a %
small variance $\sigma_w$. In our deep CP formulation, we also need to initialize entries of $\bm{A}_i^{n,r}$ from a centered normal distribution with small variance $\sigma_A$. Alike in previous works studying implicit regularization, we remarked the sensitivity of implicit regularization to model's initialization. With a too large $\sigma_w$ the model does not converge to a low-rank solution, while with a too small $\sigma_w$ a solution might exists, however, after a prohibitive number of gradient descent iterations. With our deep formulation, the product of multiple - small norm - matrices may lead to numerical instabilities and/or to the well known vanishing gradient. However, we observed implicit regularization with highest values in matrix initialization.
In \citet{jing2020implicit}, authors empirically found that standard Kaiming (He) initialization \citep{he2015delving} of multiple matrices stacked after the encoder is able to yield implicit regularization in the latent space. In order to be close to theoretical conditions, we use in our simulations zero mean and near zero standard deviation for initializing the parameters to be learned. 
We have also considered matrices that are initialized with small values except on the diagonal to speed up learning.

\subsection{Investigating Learning Behaviour}
First, we investigate typical phenomenon that we observed during the learning. In all the experiments that we report here the percentage of observed and unobserved inputs in the tensor are 20\% and 80\%. 
Importantly, the learning is very sensitive to initial settings, meaning whatever the depth the learning may converge towards low-rank or high-rank solutions (see Figure \ref{fig:multiruns}).

\begin{figure}[!h]
 	\centering
	\hspace*{-0.3cm}\includegraphics[trim=0pt 0pt 0pt 22pt,clip, scale=0.42]{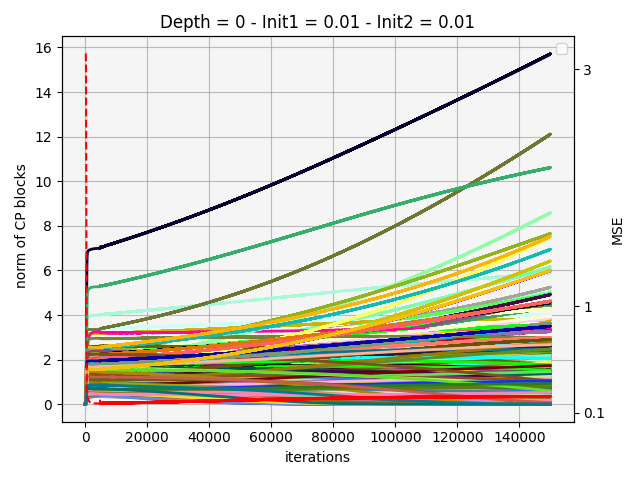}\\[-0.2cm]
 (a)\\[0.3cm]
	\includegraphics[trim=0pt 0pt 0pt 24pt,clip, scale=0.44]{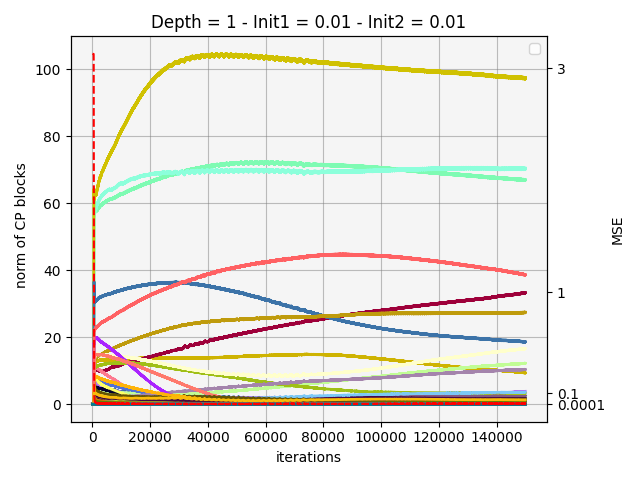}\\[-0.2cm] (b)\\[0.3cm]
	\includegraphics[trim=0pt 0pt 0pt 26pt,clip, scale=0.44]{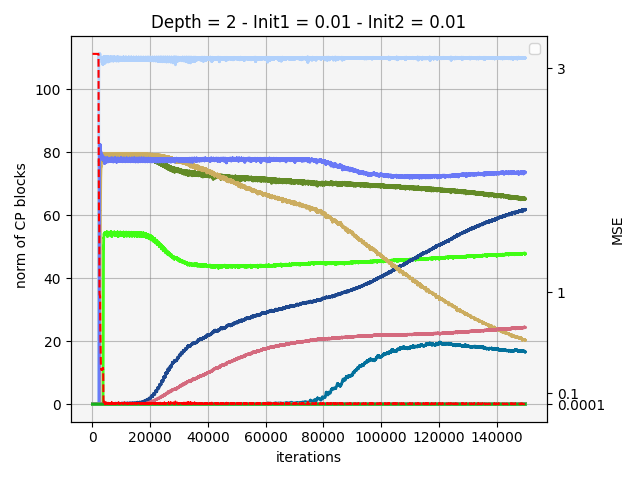}\\[-0.2cm] (c)
	\caption{
This series of figures compares the learning behaviour for shallow to deep tensors through the evolution of the norms of the blocks along training epochs for depth 0 (a), 1 (b), and 2 (c) for a particular initialization ($\sigma_w = 0.01$, $\sigma_A = 0.01$). Each figure shows 500 curves corresponding to the norm of blocks (y-scale on the left of the plot), plus an additional curve (dotted red line)  which stands for the loss (with y-scale on the right of the plot in log-scale). }
 		\label{fig:Norm_vs_epcoh_perdepth1}
\end{figure}

Second, Figures \ref{fig:Norm_vs_epcoh_perdepth1} and \ref{fig:Norm_vs_epcoh_perdepth2} show typical learning behaviour with respect to depth, for two different initialization settings, where one sees in both cases that shallow architectures tends to converge to high-rank solutions with many blocks exhibiting a non negligible norm, while increasing depth makes most blocks' norm converge to 0 yielding a much more relevant low-rank solution.

\begin{figure}[!]
 \centering
   \includegraphics[trim=0pt 0pt 0pt 12pt,clip, scale=0.84]{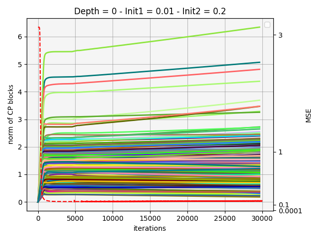}\\[-0.2cm]
   (a)\\[0.3cm]

   \includegraphics[trim=0pt 0pt 0pt 12pt,clip, scale=0.84]{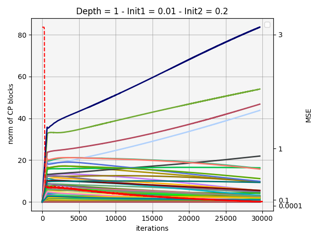}\\[-0.2cm]
   (b)\\[0.3cm]	

	\includegraphics[trim=0pt 0pt 0pt 12pt,clip, scale=0.84]{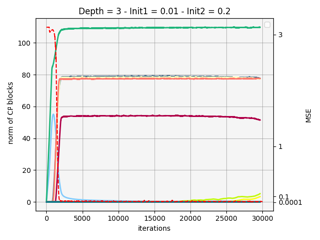}\\[-0.2cm]
	(c)\\[0.3cm]

	\hspace*{-0.3cm}\includegraphics[trim=0pt 0pt 0pt 12pt,clip, scale=0.79]{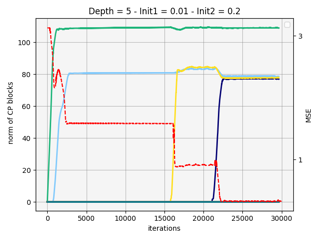}\\[-0.2cm]
	(d)
	\caption{
This series of figures compares the learning behaviour for shallow to deep tensors through the evolution of the norms of the blocks along training epochs for depth 0 (a), 1 (b), 3 (c), and 5~(d) for a particular initialization ($\sigma_w = 0.005$ and $\sigma_A = 0.1$). 
}
 		\label{fig:Norm_vs_epcoh_perdepth2}
\end{figure}

Third, one may note (see e.g. Figures~\ref{fig:Norm_vs_epcoh_perdepth1}~(c) and~\ref{fig:Norm_vs_epcoh_perdepth2}~(d)) that blocks emerge sequentially, in a greedy fashion along the training process, one at a time. This phenomenon has already been observed in e.g.~\citet{razin2020implicit} and is a consequence of the dynamics rule in Theorem~\ref{th:dy_deepcp}, as discussed in Section~\ref{sec:main_results}. We also observed  blocks whose norm rises suddenly then decrease to converge to their final norm which may be also a consequence of the polynomial dynamics as well, this is illustrated in Figure \ref{fig:Norm_vs_epcoh_perdepth1}.

\subsection{How Depth Yields Low-Rank Solutions}
  
We summarize in Figure~\ref{fig:multiruns} a number of experiments that illustrate the effectiveness of deeper architectures to consistently converge to low-rank solutions whose rank is close to the true tensor rank, whatever the initial conditions. We launched more than 1500 learning experiments using various initialization parameters and seeds, for depth ranging from 0 to 5. We report for each experiment the effective CP-rank of the model, i.e. the number of blocks that emerged at convergence (blocks that have a norm greater than $1$). Again in all experiments reported here the percentage of observed and unobserved inputs in the tensor are 20\% and 80\%. 

 \begin{figure}[t]
 	\centering
	\hspace*{-0.3cm}\includegraphics[scale=0.54]{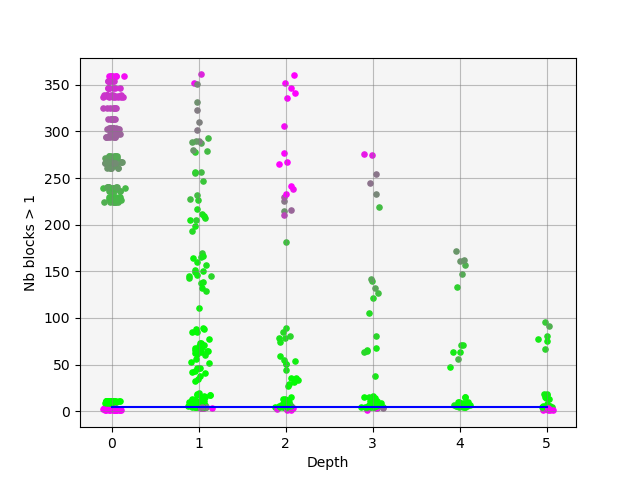}
	\caption{Analysis of the impact of the depth on the rank of the learned tensor. The figure shows for shallow (on the left, depth=0) to deep models (up to depth=5, on the right) the effective rank of the learned tensors for a number of runs that differ from initialization setting. Every single point stands for a learning experiment. Points are plotted with a small random displacement in x and y coordinates to better see point clouds. The color represents the test loss of the model, from green to purple respectively for small to large loss (ranging from $5.10^{-5}$ to $3.35$). Few runs diverged and some led to 0 rank due to lacks of iterations (11 over 1500), those runs are not reported here. The blue line shows the real tensor rank (5) to approximate. Runs below this line lead to high losses. Conversely, most of runs which  converged to higher ranks are able to minimize the loss objective.}
 		\label{fig:multiruns}
\end{figure}

One can observe much more variability on the learned tensor rank when the depth is limited. For shallower architectures, the impact of initialization is huge and the solutions are mostly of high rank. For deeper architectures, the learned tensor rank is much more stable, close to the true tensor rank, hence showing lower dependency to the initialization setting.

 \begin{figure}[t]
 	\centering
\hspace*{-0.3cm}\includegraphics[trim={0 0 0 0},clip,scale=0.535]{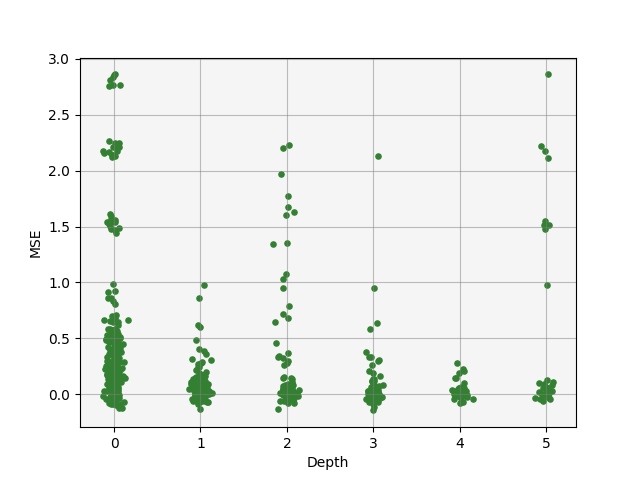}
\caption{Analysis of the impact of the depth (x-axis) on the generalization loss (y-axis). The figure shows from shallow (on the left, depth=0) to deep models (up to depth=5, on the right), the generalization losses achieved in a number of runs that differ from initialization setting. Every single point stands for a learning experiment. Points are plotted with a small random displacement in x and y coordinates to better see point clouds.}
\label{fig:multiruns_losses}
\end{figure}

\subsection{Impact of the Depth on Generalization Loss}

Finally, we explore how and when the depth may help achieving generalization. Figure~\ref{fig:multiruns_losses} is a similar plot as  Figure~\ref{fig:multiruns} but where the y-axis stands for the generalization loss. We again run many experiments for depth from 0 to 5 and for various initialization settings. In this figure all experiments reported have been obtained using percentages of observed and unobserved inputs equal to 20\% and 80\%. 

As may be observed, whatever the depth, a small generalization loss may be achieved. However, increasing the depth makes the optimization much more robust and stable with respect to initialization. Depth consistently helps reaching best achievable generalization loss whatever the initialization. Hence, increasing the depth allows reaching low-rank approximation as well as low generalization loss.

To go deeper in the analysis, Table \ref{tab:loss} reports for the depth ranging from 0 to 5, and for a percentage of unobserved values ranging from 75\% to 90\%, the smallest loss obtained on validation data whatever the initialization, and the rank of the corresponding learned tensor (in brackets). As we have already discussed,  when running many experiments with various initialization settings one may most often get a low-rank solution  whatever the depth (see bottom points for every depth in Figure~\ref{fig:multiruns}). This explains why many of these best performing solutions are of low rank, whatever the depth (e.g. first line with depth=0) and the percentage of unobserved inputs. Yet, these results show that depth often allows reaching low-rank solutions as well as low  generalization loss, and thus achieving a good trade-off  between tensor rank and generalization error.

\begin{table}[t]
\begin{center}
\begin{tabular}{cccc} 
\hline
\small
& 75 & 85 & 90 \\
\hline
0 &
 7.925e-05 (5) &
 2.598e-05 (14) &
 5.836e-06 (11) \\
1 &
 1.479e-05 (67) &
 2.685e-04 (12) &
  1.42e-05 (7) \\
2 &
 3.607e-06 (10) &
 5.215e-05 (17) &
  2.37e-04 (13) \\
3 &
 2.073e-06 (13) &
 8.048e-04 (10) &
 2.933e-04 (6) \\
4 &
 8.053e-04 (5) &
 2.238e-04 (6) &
 2.350e-04 (9) \\
5 &
 8.116e-03 (5) &
 8.169e-01 (3) &
 5.265e-01 (4) \\
\hline
\end{tabular}
\caption{Comparison of best performing tensor factorization for a number of cases corresponding to few architecture depths and to  various percentages of missing tensor inputs at training time, ranging from 75\% to 90\%. The best generalization and the effective rank of the best tensor factorization (in brackets) are reported.}
\label{tab:loss}
\end{center}
\end{table}

\subsection{Real-World Data}
\label{subsec:realdata}
 We also run experiments on two real data sets. We used Meteo-UK\footnote{\url{https://www.metoffice.gov.uk/research/climate/maps-and-data/historic-station-data}.} 
and CCDS\footnote{\url{https://viterbi-web.usc.edu/~liu32/data.html}.} data sets~\citep{lozano2009spatial}, which contain monthly measurements of temporal variables in various stations across UK and North America, resulting in tensors of dimension $(50, 16, 5)$ and $(50, 15, 25)$, respectively.
Figure~\ref{fig:realdata}  shows completion performance with $30\%$ of observed data for multiple runs varying with initialization std in $[0.0005, \ldots, 0.001]$. 
The obtained experimental results on real-world data corroborate the simulations and confirm our theoretical findings.
We also used in this experiment random initialization with rank-one matrices and observed that the same experimental behaviour is reproduced~(see Appendix~\ref{app:xp}).

\begin{figure*}[t]
 	\centering
 	 \begin{minipage}[b]{0.49\linewidth}
	\includegraphics[scale=0.53]{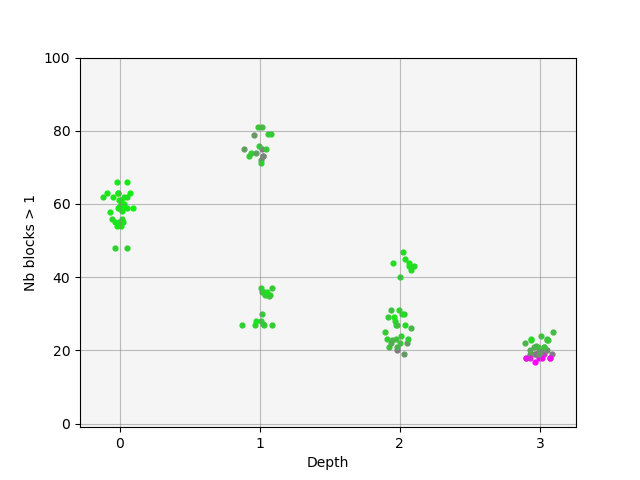}
	\end{minipage}
	\begin{minipage}[b]{0.49\linewidth}
	\includegraphics[scale=0.53]{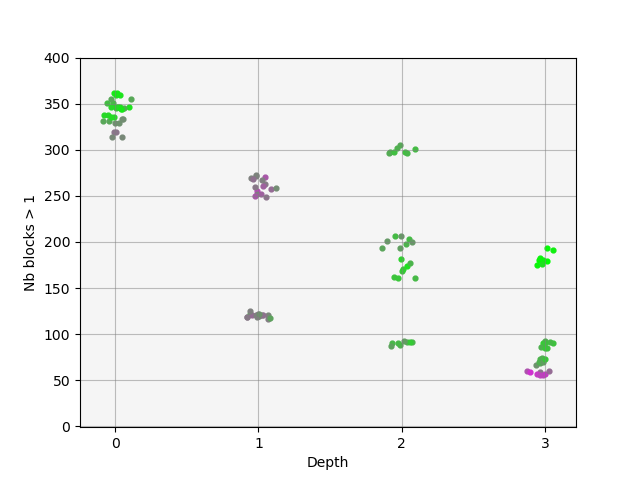}
	\end{minipage}
	\caption{Tensor completion using Meteo-UK (left) and CCDS (right) data sets: analysis of the impact of the depth on the rank of the learned tensor. The figure shows for shallow (on the left, depth=0) to deep models (up to depth=5, on the right) the effective rank of the learned tensors for a number of runs that differ from initialization setting. Every single point stands for a learning experiment. Points are plotted with a small random displacement in x and y coordinates to better see point clouds. The color represents the test loss of the model, from green to purple respectively for small to large loss (ranging from $0.3$ to $1.1$ for Meteo-UK and from $0.59$ to $1.12$ for CCDS).} 
 		\label{fig:realdata}
\end{figure*}

\section{Conclusion}

We provided a theoretical analysis of
implicit regularization in deep tensor factorization building on previous advances studying tensor 
and deep matrix factorization. Our results suggest a form of greedy low tensor rank search, but where the impact of implicit regularization is polynomially dependent of the depth. Experiments confirmed our theoretical results and provided insights on the main role of initialization, especially for shallow architectures. While shallow architectures may converge to low-rank and high-rank solutions, deeper factorizations consistently converge to low-rank solutions, close to the true tensor rank. Finally, deeper architectures seem to help reaching more consistently best performing solutions with respect to the generalization error.

\section*{Acknowledgements}
We thank the reviewers and the meta-reviewer for their helpful comments and for suggesting a way to improve Theorem~\ref{th:dy_deepcp}.
Research reported in this paper was partially supported by PHC Utique no.~44318NJ granted by the Ministry of Higher Education and Scientific Research of Tunisia and the Ministry of Foreign Affairs in France.


\bibliography{implicit_regularization}
\bibliographystyle{icml2022}

\newpage
\appendix
\onecolumn
\section{Proofs}
\label{app:proofs}

We provide here the proofs of Lemma~\ref{lem:dy} and Theorem~\ref{th:dy_deepcp}. First let us recall that we consider learning a tensor $\mathcal{W}$ which has the following form $${\mathcal{W}}=\displaystyle\sum_{r=1}^R\underset{n=1}{\overset{N}\bigotimes}\Big (\prod_{i=1}^{k_n}\bm{A}_i^{n,r}\bm{w}_r^n\Big ),\quad \bm{A}_i^{n,r}\in \mathbb{R}^{d_n\times d_n},  \bm{w}_r^{n}\in \mathbb{R}^{d_n},$$
by minimizing the loss function $\mathcal{L}(\mathcal{W}) = \Phi\Big(\{\bm{w}_r^n\}_{r=1\ n=1}^{\ R\ \ \ N},\ \{\bm{A}_i^{n,r}\}_{r=1\ n=1\ i=1}^{\ R\ \ \ N\ \ k_n}\Big )$ using gradient descent.
Then with infinitesimally small learning rate and  non-zero initialization, we have 
$$\frac{d}{dt}\bm{w}_r^n(t)=-\frac{\partial}{\partial \bm{w}_r^n}\Phi \Big(\{\bm{w}_{r'}^{n'}(t)\}_{r'=1\ n'=1}^{\ R\ \ \ \ N},\ \{\bm{A}_i^{n',r'}(t)\}_{r'=1\ n'=1\ i=1}^{\ R\ \ \ \ N\ \ \ k_n'}\Big),$$
and
$$\frac{d}{dt}\bm{A}_i^{r,n}(t)=-\frac{\partial}{\partial\bm{A}_{i}^{r,n}}\Phi \Big(\{\bm{w}_{r'}^{n'}(t)\}_{r'=1\ n'=1}^{\ R\ \ \ \ N},\ \{\bm{A}_i^{n',r'}(t)\}_{r'=1\ n'=1\ i=1}^{\ R\ \ \ \ N\ \ \ k_n'}\Big).$$

To show Lemma~\ref{lem:dy}, we will use the following result shown in~\citet{razin2021implicit}.
	\begin{lemma}\label{lem1}
		$\forall \mathcal{A}\in \mathbb{R}^{d_1\times \ldots \times d_N}$ and $\{\bm{w}^n\in \mathbb{R}^{d_n}\}_{n=1}^N$ where $d_1, \ldots d_N\in \mathbb{N}$, it holds that
		$$\left\langle \mathcal{A}, \underset{n'=1}{\overset{N}\bigotimes}\bm{w}^{n'}\right\rangle=\left\langle [\mathcal{A}]_{(n)}\cdot \underset{n'\ne n}{\odot}\bm{w}^{n'},\bm{w}^n\right\rangle,\ n=1,\hdots,N$$
		where $[\mathcal{A}]_{(n)}$ is matricization of the tensor $\mathcal{A}$ in the mode $n$, and $\odot$ is the kronecker product.
	\end{lemma}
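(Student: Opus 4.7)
\textbf{Proof plan for Lemma~\ref{lem1}.}
The plan is a direct coordinate-wise verification. The identity is essentially a restatement of how mode-$n$ matricization and the Kronecker product jointly interact with the entries of $\mathcal{A}$, so no dynamics or optimization arguments are needed; everything reduces to a relabelling of indices in a finite sum.

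First I would expand the left-hand side using the coordinate definitions of the tensor inner product and the outer product:
\begin{equation*}
\left\langle \mathcal{A}, \bigotimes_{n'=1}^N \bm{w}^{n'} \right\rangle = \sum_{i_1, \ldots, i_N} \mathcal{A}_{i_1, \ldots, i_N} \prod_{n'=1}^N (\bm{w}^{n'})_{i_{n'}}.
\end{equation*}
For the right-hand side, fix the mode $n$ and fix a bijection $\pi$ enumerating the product index set $\prod_{n' \ne n} \{1, \ldots, d_{n'}\}$ — concretely, take reverse-lexicographic order, which is the convention compatible with the Kronecker product. Then $[\mathcal{A}]_{(n)}$ is the $d_n \times \prod_{n' \ne n} d_{n'}$ matrix whose entry indexed by $(i_n, \pi((i_{n'})_{n' \ne n}))$ equals $\mathcal{A}_{i_1, \ldots, i_N}$, and the Kronecker product $\underset{n' \ne n}{\odot}\, \bm{w}^{n'}$ is the column vector whose $\pi((i_{n'})_{n' \ne n})$-th entry is $\prod_{n' \ne n} (\bm{w}^{n'})_{i_{n'}}$ for the same $\pi$. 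Because the two objects use the same linearization, the matrix-vector product collapses coordinate-wise to
\begin{equation*}
\Big( [\mathcal{A}]_{(n)} \cdot \underset{n' \ne n}{\odot}\, \bm{w}^{n'} \Big)_{i_n} = \sum_{(i_{n'})_{n' \ne n}} \mathcal{A}_{i_1, \ldots, i_N} \prod_{n' \ne n} (\bm{w}^{n'})_{i_{n'}}.
\end{equation*}

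Pairing this vector with $\bm{w}^n$ in the Euclidean inner product and applying Fubini to the resulting finite sum yields
\begin{equation*}
\sum_{i_n} (\bm{w}^n)_{i_n} \sum_{(i_{n'})_{n' \ne n}} \mathcal{A}_{i_1, \ldots, i_N} \prod_{n' \ne n} (\bm{w}^{n'})_{i_{n'}} = \sum_{i_1, \ldots, i_N} \mathcal{A}_{i_1, \ldots, i_N} \prod_{n'=1}^N (\bm{w}^{n'})_{i_{n'}},
\end{equation*}
which matches the expanded left-hand side, proving the equality.

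The only potential obstacle is purely notational: one must check that the bijection implicit in the definition of $[\mathcal{A}]_{(n)}$ agrees with the one implicit in $\underset{n' \ne n}{\odot}\, \bm{w}^{n'}$. Since the standard convention for both constructions is the same (reverse-lexicographic enumeration of $(i_{n'})_{n' \ne n}$), the identification is automatic; if some other ordering were chosen for one of them, the two linearizations would differ by a common permutation of columns of $[\mathcal{A}]_{(n)}$ and entries of the Kronecker product, and the computation above goes through unchanged.
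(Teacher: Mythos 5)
The paper does not provide a proof of this lemma; it is imported directly from Razin et al.\ (2021) as a known result, so there is no internal argument to compare against. Your coordinate-level verification is correct and is the standard way to establish this matricization identity: expand $\langle \mathcal{A}, \bigotimes_{n'} \bm{w}^{n'} \rangle$ entry-wise, observe that the $i_n$-th coordinate of $[\mathcal{A}]_{(n)}\,\underset{n'\ne n}{\odot}\,\bm{w}^{n'}$ is the partial sum of $\mathcal{A}_{i_1,\ldots,i_N}\prod_{n'\ne n}(\bm{w}^{n'})_{i_{n'}}$ over the complementary indices, and then sum over $i_n$ to recover the full sum. Your closing remark about the linearization convention is the right thing to flag: the equality requires the column ordering of $[\mathcal{A}]_{(n)}$ to agree with the factor ordering in the Kronecker product (in the Kolda--Bader convention the Kronecker factors run in \emph{decreasing} mode index, which pairs with the standard mode-$n$ unfolding), but, as you correctly observe, any consistent pair of conventions works because a common permutation of the $\prod_{n'\ne n}d_{n'}$ column indices leaves the bilinear pairing unchanged. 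The proof is complete and, being purely combinatorial, is arguably more self-contained than the paper's citation.
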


\subsection{Proof of Lemma~\ref{lem:dy}}
\subsubsection{Proof of $(i)$}

We compute $\displaystyle\frac{d}{dt}\|\bm{A}_i^{n,r}(t)\|^2$. We assume that  $\{\bm{w}_r^n\}_{r=1\ n=1}^{\ R\ \ \ N}$ and $\{\bm{A}_j^{n',s}\}_{(j,n',s)\ne (i,n,r)}$ are fixed, and consider
$\Phi_{i,n,r}(\bm{A}_i^{n,r})=\Phi\Big (\{\bm{w}_r^{n'}\}_{r=1\ n'=1}^{\ R\ \ \ N},\Big\{\bm{A}_j^{n',s}\Big\}_{s=1\ n'=1\ j=1}^{\ R\ \ \ N\ \ \ \ k_{n'}}\Big)$.
\bigskip

For $\Delta\in\mathbb{R}^{d_n\times d_n}$, using Taylor approximation we have
\begin{eqnarray*}
	\Phi_{i,n,r}(\bm{A}_i^{n,r}+\Delta)&=&\mathcal{L}\Big(\mathbf{\mathcal{W}}
	+\underset{n'=1}{\overset{n-1}\bigotimes} \prod_{j=1}^{k_{n'}}\bm{A}_j^{n',r}\bm{w}_r^{n'}\otimes \displaystyle\prod_{j=1}^{i-1}\bm{A}_j^{n,r}\Delta \prod_{j=i+1}^{k_n}\bm{A}_j^{n,r}\bm{w}_r^n\otimes \underset{n'=n+1}{\overset{N}\bigotimes}\displaystyle\prod_{j=1}^{k_{n'}}\bm{A}_j^{n',r}\bm{w}_r^{n'}
	\Big)\nonumber\\
	&=&\mathcal{L}({\mathcal{W}})+\left\langle \nabla \mathcal{L}({\mathcal{W}}),\underset{n'=1}{\overset{n-1}\bigotimes} \prod_{j=1}^{k_{n'}}\bm{A}_j^{n',r}\bm{w}_r^{n'}\otimes \displaystyle\prod_{j=1}^{i-1}\bm{A}_j^{n,r}\Delta \prod_{j=i+1}^{k_n}\bm{A}_j^{n,r}\bm{w}_r^n\otimes \underset{n'=n+1}{\overset{N}\bigotimes}\displaystyle\prod_{j=1}^{k_{n'}}
	\bm{A}_j^{n',r}\bm{w}_r^{n'}\right\rangle\nonumber\\
	&+&o(\|\Delta\|)\nonumber\\
	&=&\mathcal{L}(\mathbf{\mathcal{W}})+\left\langle [\nabla \mathcal{L}(\mathbf{\mathcal{W}})]_{(n)}\cdot \underset{n'\ne n}{\odot} \prod_{j=1}^{k_{n'}}\bm{A}_j^{n',r}\bm{w}_r^{n'},\prod_{j=1}^{i-1}\bm{A}_j^{n,r}\Delta \prod_{j=i+1}^{k_n}\bm{A}_j^{n,r}\bm{w}_r^n\right\rangle+o(\|\Delta\|)\nonumber\\
	&=&\mathcal{L}(\mathbf{\mathcal{W}})+\left\langle \Big (\prod_{j=1}^{i-1}\bm{A}_j^{n,r}\Big)^\top[\nabla \mathcal{L}(\mathbf{\mathcal{W}})]_{(n)}\Big (\underset{n'\ne n}{\odot}\prod_{j=1}^{k_{n'}}\bm{A}_j^{n',r}\bm{w}_r^{n'}\Big )\bm{w}_r^{n^{\top}}\Big (\prod_{j=i+1}^{k_n}\bm{A}_j^{n,r}\Big )^\top,\Delta\right\rangle\\
	&+&o(\|\Delta\|).\nonumber
\end{eqnarray*}
This implies that 
\begin{equation*}
	 \frac{\partial}{\partial \bm{A}_i^{n,r}}\Phi\Big(\!\{\bm{w}_{r'}^{n'}\}_{r'=1\ n'=1}^{\ R\ \ \ N},\!\Big\{\!\bm{A}_i^{n',r'}\!\Big\}_{r'=1\ n'=1\ i'=1}^{\ R\ \ \ N\ \ \ \ k_n'}\Big) \!=\!\Big(\!\displaystyle\prod_{j=1}^{i-1}\bm{A}_j^{n,r}\!\Big)^\top\![\nabla \mathcal{L}(\mathbf{\mathcal{W}})]_{(n)}\Big (\underset{n'\ne n}{\odot}\displaystyle\prod_{j=1}^{k_{n'}}\bm{A}_j^{n',r}\bm{w}_r^{n'}\Big ){\bm{w}_r^{n}}^\top\Big (\!\displaystyle\prod_{j=i+1}^{k_n}\bm{A}_j^{n,r}\Big )^\top.
\end{equation*}
Since $\displaystyle\frac{d}{dt}\bm{A}_i^{n,r}(t)=-\frac{\partial}{\partial \bm{A}_i^{n,r}}\Phi\Big (\{\bm{w}_{r'}^{n'}(t)\}_{r'=1\ n'=1}^{\ R\ \ \ N},\Big\{\bm{A}_{i'}^{n',r'}(t)\Big\}_{r'=1\ n'=1\ i'=1}^{\ R\ \ \ N\ \ \ \ k_{n'}}\Big)$, we have
\begin{equation}
\label{eq:deriveAi}
    \frac{d}{dt}\bm{A}_i^{n,r}(t)=- \Big(\displaystyle\prod_{j=1}^{i-1}\bm{A}_j^{n,r}(t)\Big)^\top[\nabla \mathcal{L}({\mathcal{W}(t)})]_{(n)}\Big (\underset{n'\ne n}{\odot}\displaystyle\prod_{j=1}^{k_{n'}}\bm{A}_j^{n',r}(t)\bm{w}_r^{n'}(t)\Big )\bm{w}_r^{n}(t)^\top\Big (\displaystyle\prod_{j=i+1}^{k_n}\bm{A}_j^{n,r}(t)\Big )^\top.
\end{equation}
Then
\begin{align}
	\frac{d}{dt}\|\bm{A}_i^{n,r}(t)\|^2&=2\left\langle \bm{A}_i^{n,r}(t),\frac{d}{dt}\bm{A}_i^{n,r}(t)\right\rangle\nonumber\\
	&=-2\left\langle \bm{A}_i^{n,r}(t),\frac{\partial}{\partial \bm{A}_i^{n,r}(t)}\Phi\Big (\{\bm{w}_{r'}^{n'}(t)\}_{r'=1\ n'=1}^{\ R\ \ \ N},\Big\{\bm{A}_{i'}^{n',r'}(t)\Big\}_{r'=1\ n'=1\ i'=1}^{\ R\ \ \ N\ \ \ \ k_{n'}}\Big)\right\rangle\nonumber\\
	&=-2\left\langle \!\bm{A}_i^{n,r}(t),\Big (\prod_{j=1}^{i-1}\bm{A}_j^{n,r}(t)\Big)^\top\![\nabla \mathcal{L}({\mathcal{W}(t)})]_{(n)}\Big (\!\underset{n'\ne n}{\odot}\prod_{j=1}^{k_{n'}}\bm{A}_j^{n',r}(t)\bm{w}_r^{n'}(t)\Big )\bm{w}_r^{n}(t)^\top\!\Big (\!\prod_{j=i+1}^{k_n}\bm{A}_j^{n,r}(t)\Big )^\top\right\rangle\nonumber\\
	&=-2\left\langle\prod_{j=1}^{i-1}\bm{A}_j^{n,r}(t)\cdot \bm{A}_i^{n,r}(t)\prod_{j=i+1}^{k_n}\bm{A}_j^{n,r}(t)\bm{w}_r^n(t),[\nabla \mathcal{L}({\mathcal{W}(t)})]_{(n)}\underset{n'\ne n}{\odot}\prod_{j=1}^{k_{n'}}\bm{A}_j^{n',r}(t)\bm{w}_r^{n'}(t)\right\rangle\nonumber\\
	&=-2\left\langle\prod_{j=1}^{k_n}\bm{A}_j^{n,r}(t)\bm{w}^{n,r}(t),[\nabla \mathcal{L}({\mathcal{W}(t)})]_{(n)}\underset{n'\ne n}{\odot}\prod_{j=1}^{k_{n'}}\bm{A}_j^{n',r}(t)\bm{w}_r^{n'}(t)\right\rangle.\nonumber\\
	&=-2\left\langle\nabla\mathcal{L}({\mathcal{W}(t)}),
	\underset{n'=1}{\overset{N}\bigotimes}\prod_{j=1}^{k_{n'}}\bm{A}_j^{n',r}(t)\bm{w}_r^{n'}(t)\right\rangle.  
	\nonumber
\end{align}
$\displaystyle\frac{d}{dt}\|\bm{A}_i^{n,r}(t)\|^2$ is independent of $n$ and $i$, then $\forall (n,m)\in \llbracket1,N\rrbracket^2$ $\forall (i,j)\in \llbracket1,k_n\rrbracket\times \llbracket1,k_m\rrbracket$ $\|\bm{A}_i^{n,r}(t)\|^2$ and $\|\bm{A}_j^{m,r}(t)\|^2$ have the same derivative, which implies that $\|\bm{A}_i^{n,r}(t)\|^2-\|\bm{A}_j^{m,r}(t)\|^2$ does not vary with time t. Thus 
$$\|\bm{A}_i^{n,r}(t)\|^2-\|\bm{A}_j^{m,r}(t)\|^2=\|\bm{A}_i^{n,r}(0)\|^2-\|\bm{A}_j^{m,r}(0)\|^2.$$

\subsubsection{Proof of $(ii)$}
We now compute $\displaystyle \frac{d}{dt}\|\bm{w}_r^n(t)\|^2$. We assume that $\{\bm{w}_{s}^{n'}\}_{(n',s)\neq(n,r)}$ and $\{\bm{A}_{j}^{n',s}\}_{s=1\ n'=1\ j=1}^{\ R\ \ \ N\ \ \ k_{n'}}$ are fixed, and consider $\Phi_r^n(\bm{w}_r^n)=\Phi\left(\{\bm{w}_{r'}^{n'}\}_{r'=1\ n'=1}^{\ R\ \ \ N}, \{\bm{A}_{i'}^{n',r'}\}_{r'=1\ n'=1\ i'=1}^{\ R\ \ \ \ N\ \ \ k_{n'}}\right)$.
\medskip

For $\Delta\in\mathbb{R}^{d_n}$, using Taylor approximation we have
\begin{eqnarray}
	\Phi_r^n(\bm{w}_r^n+\Delta)&=&\mathcal{L}\left(\mathbf{\mathcal{W}}+\underset{n'=1}{\overset{n-1}\bigotimes}\prod_{i=1}^{k_{n'}}\bm{A}_i^{n',r}\bm{w}_r^{n'}\otimes \prod_{i=1}^{k_n}\bm{A}_i^{n,r}\Delta \otimes \underset{n'=n+1}{\overset{N}\bigotimes}\prod_{i=1}^{k_{n'}}\bm{A}_i^{n',r}\bm{w}_r^{n'}\right)\nonumber\\
	&=&\mathcal{L}(\mathbf{\mathcal{W}})+\left\langle \nabla \mathcal{L}(\mathbf{\mathcal{W}}),\underset{n'=1}{\overset{n-1}\bigotimes}\prod_{i=1}^{k_{n'}}\bm{A}_i^{n',r}\bm{w}_r^{n'}\otimes \prod_{i=1}^{k_n}\bm{A}_i^{n,r}\Delta \otimes \underset{n'=n+1}{\overset{N}\bigotimes}\prod_{i=1}^{k_{n'}}\bm{A}_i^{n',r}\bm{w}_r^{n'}\right\rangle +o(\|\Delta \|) \nonumber\\
	&=&\mathcal{L}(\mathbf{\mathcal{W}})+\left\langle [\nabla \mathcal{L}(\mathbf{\mathcal{W}})]_{(n)}\cdot \Big (\underset{n'\ne n}{\odot}\prod_{i=1}^{k_{n'}}\bm{A}_i^{n',r}\bm{w}_r^{n'}\Big), \prod_{i=1}^{k_{n}}\bm{A}_i^{n,r}\Delta\right\rangle+o(\|\Delta \|)\nonumber\\
	&=&\mathcal{L}(\mathbf{\mathcal{W}})+\left\langle \Big (\prod_{i=1}^{k_{n'}}\bm{A}_i^{n,r}\Big)^\top[\nabla \mathcal{L}(\mathbf{\mathcal{W}})]_{(n)}\cdot \Big (\underset{n'\ne n}{\odot}\prod_{i=1}^{k_{n}}\bm{A}_i^{n',r}\bm{w}_r^{n'}\Big), \Delta\right\rangle+o(\|\Delta \|).\nonumber
\end{eqnarray}
This implies that $\displaystyle \frac{\partial}{\partial \bm{w}_r^n}\Phi\left(\{\bm{w}_{r'}^{n'}\}_{r'=1\ n'=1}^{\ R\ \ \ \ N}, \{A_{i'}^{n',r'}\}_{r'=1\ n'=1\ i'=1}^{R\ \ \ \ \ N\ \ \ \ \ \ k_{n'}}\right)=
\Big (\prod_{i=1}^{k_{n}}\bm{A}_i^{n,r}\Big )^\top[\nabla \mathcal{L}(\mathbf{\mathcal{W}})]_{(n)}\cdot \Big (\underset{n'\ne n}{\odot}\prod_{i=1}^{k_{n'}}\bm{A}_i^{n',r}\bm{w}_r^{n'}\Big)$. 
Then
\begin{eqnarray*}
	\frac{d}{dt}\|\bm{w}_r^n(t)\|^2&=&2\left\langle \bm{w}_r^n(t),\frac{d}{dt}\bm{w}_r^n(t)\right\rangle=-2\left\langle \bm{w}_r^n(t),\frac{\partial}{\partial \bm{w}_r^n}\Phi\left(\{\bm{w}_{r'}^{n'}(t)\}_{r'=1\ n'=1}^{\ R\ \ \ \ N}, \{A_{i'}^{n',r'}(t)\}_{r'=1\ n'=1\ i'=1}^{\ R\ \ \ N\ \ \ \ k_{n'}}\right)\right\rangle\\
	&=&-2\left\langle \bm{w}_r^n(t),\Big (\prod_{i=1}^{k_{n}}\bm{A}_i^{n,r}(t)\Big )^\top[\nabla \mathcal{L}(\mathcal{W}(t))]_{(n)}\cdot \Big (\underset{n'\ne n}{\odot}\prod_{i=1}^{k_{n'}}\bm{A}_i^{n',r}(t)\bm{w}_r^{n'}(t)\Big )\right\rangle\\
	&=&-2\left\langle\prod_{i=1}^{k_{n}}\bm{A}_{i}^{n,r}(t).\bm{w}_{r}^{n}(t),[\nabla \mathcal{L}(\mathcal{W}(t))]_{(n)}\cdot\Big(\underset{n'\ne n}{\odot}\prod_{i=1}^{k_{n'}}\bm{A}_{i}^{n',r}(t)\bm{w}_r^{n'}(t)\Big)\right\rangle\\
	&=&2\left\langle-\nabla \mathcal{L}(\mathcal{W}(t)),\underset{n'=1}{\overset{N}\bigotimes}
	\prod_{i=1}^{k_{n'}}\bm{A}_i^{n',r}(t)\bm{w}_r^{n'}(t)\right\rangle
\end{eqnarray*}
$\displaystyle\frac{d}{dt}\|\bm{w}_r^n(t)\|^2$ is independent of $n$ then $\forall n, m\in \llbracket1,N\rrbracket$, $\|\bm{w}_r^n(t)\|^2$ and $\|\bm{w}_r^m(t)\|^2$ have the same derivative, which implies that
$ \|\bm{w}_r^n(t)\|^2-\|\bm{w}_r^m(t)\|^2$ dos not vary with time t. Thus
$$\|\bm{w}_r^n(t)\|^2-\|\bm{w}_r^m(t)\|^2=\|\bm{w}_r^n(0)\|^2-\|\bm{w}_r^m(0)\|^2.$$

\subsubsection{Proof of $(iii)$}
From the proofs of $(i)$ and $(ii)$, we have, $\forall n,m\in \llbracket1,N\rrbracket$ and $i\in \llbracket1,k_n\rrbracket$,
$$\frac{d}{dt}\|\bm{w}_r^n(t)\|^2=\frac{d}{dt}\|\bm{A}_i^{n,r}(t)\|^2 = 2\left\langle-\nabla \mathcal{L}(\mathcal{W}(t)),\underset{n'=1}{\overset{N}\bigotimes}
\prod_{i=1}^{k_{n'}}\bm{A}_i^{n',r}(t)\bm{w}_r^{n'}(t)\right\rangle,$$ which implies that $$ \|\bm{w}_r^n(t)\|^2-\|\bm{A}_i^{n,r}(t)\|^2=\|\bm{w}_r^n(0)\|^2-\|\bm{A}_i^{n,r}(0)\|^2.$$

\subsection{Proof of Theorem~\ref{th:dy_deepcp}}
\subsubsection{Proof of $(i)$}
First note that \begin{eqnarray}\frac{d}{dt}\|\bm{w}_r^n(t)\|&=&\frac{1}{2\|\bm{w}_r^n(t)\|}\frac{d}{dt}\Big (\|\bm{w}_r^n(t)\|^2\Big )\nonumber\\
	&=&\frac{1}{\|\bm{w}_r^n(t)\|}\left\langle -\nabla \mathcal{L}(\mathcal{W}(t)),\underset{n'=1}{\overset{N}\bigotimes}
	\prod_{i=1}^{k_{n'}}\bm{A}_i^{n',r}(t)\bm{w}_r^{n'}(t)\right\rangle.\nonumber
\end{eqnarray}
We now compute $\displaystyle\frac{d}{dt}\left\|\underset{n=1}{\overset{N}\bigotimes}\bm{w}_r^n(t)\right\|$.
\begin{eqnarray}
	&&\frac{d}{dt}\left\|\underset{n=1}{\overset{N}\bigotimes}\bm{w}_r^n(t)\right\|=\frac{d}{dt}\Big (\prod_{n=1}^N\|\bm{w}_r^n(t)\|\Big )\nonumber\\
	&&=\sum_{n=1}^N\frac{d}{dt}\|\bm{w}_r^n(t)\|\prod_{n'\ne n}\|\bm{w}_r^{n'}(t)\|\nonumber\\
	&&=\sum_{n=1}^N\frac{1}{\|\bm{w}_r^n(t)\|}\left\langle-\nabla\mathcal{L}(\mathcal{W}(t)),
	\underset{n'=1}{\overset{N}\bigotimes}\prod_{i=1}^{k_{n'}}\bm{A}_i^{n',r}(t)\bm{w}_r^{n'}(t)\right\rangle
	\prod_{n'\ne n}\|\bm{w}_r^{n'}(t)\|\nonumber\\
	&&=\sum_{n=1}^N\left\langle-\nabla\mathcal{L}(\mathcal{W}(t)),\underset{n'=1}{\overset{N}\bigotimes}
	\prod_{i=1}^{k_{n'}}\bm{A}_i^{n',r}(t)\widehat{\bm{w}}_r^{n'}(t)\right\rangle\prod_{n'\ne n}\|\bm{w}_r^{n'}(t)\|^2, \nonumber
\end{eqnarray}
where $\widehat{\bm{w}}_r^{n'}(t)=\frac{\bm{w}_r^{n'}(t)}{\|\bm{w}_r^{n'}(t)\|}$.

Let $\gamma_r(t):=\left\langle -\nabla\mathcal{L}(\mathcal{{W}}(t)),\displaystyle\underset{n'=1}{\overset{N}\bigotimes}\displaystyle\prod_{i=1}^{k_{n'}}
\bm{A}_i^{n',r}(t)\widehat{\bm{w}}_r^{n'}(t)\right\rangle$
and assume that $\gamma_r(t)\geq 0$, we have 
$$\|\bm{w}_r^n(t)\|^2\leq \min_{n'\in \llbracket1,N\rrbracket}\|\bm{w}_r^{n'}(t)\|^2+\|\bm{w}_r^n(t)\|^2-\min_{n'\in \llbracket1,N\rrbracket}\|\bm{w}_r^{n'}(t)\|^2$$
So, $$\|\bm{w}_r^n(t)\|^2\leq  \min_{n'\in \llbracket1,N\rrbracket}\|\bm{w}_r^{n'}(t)\|^2+\varepsilon_1(t)= \Big (\Big (\min_{n'\in \llbracket1,N\rrbracket}\|\bm{w}_r^{n'}(t)\|\Big)^N\Big)^{\frac{2}{N}}+\varepsilon_1(t),$$

where $\varepsilon_1(t):=\displaystyle\max\limits_{\underset{(n,m)\in
		\llbracket1,N\rrbracket^2}{r\in\{1\hdots,R\}}}\Big |\|\bm{w}_r^n(t)\|^2-\|\bm{w}_r^m(t)\|^2\Big|
	$
	denotes the unbalancedness magnitude of the weight vectors $\bm{w}_r^n(t)$.

$\Longrightarrow \|\bm{w}_r^n(t)\|^2\leq \Big (\displaystyle\prod_{n'=1}^N\|\bm{w}_r^{n'}(t)\|\Big)^{\frac{2}{N}}+\varepsilon_1(t)
=\left\|\underset{n=1}{\overset{N}\bigotimes}\bm{w}_r^n(t)\right\|^{\frac{2}{N}}+\varepsilon_1(t)$

$\overset{\gamma_r(t)\geq 0}{\Longrightarrow}\displaystyle\prod_{n'\ne n}\|\bm{w}_r^{n'}(t)\|^2\gamma_r(t)\leq \Big (\left\|\underset{n=1}{\overset{N}\bigotimes}\bm{w}_r^n(t)\right\|^{\frac{2}{N}}+\varepsilon_1(t)\Big )^{N-1}\gamma_r(t).$
This gives the following upper bound:
$$\displaystyle \frac{d}{dt}\left\|\underset{n=1}{\overset{N}\bigotimes}\bm{w}_r^n(t)\right\|\leq N\gamma_r(t)\Big (\left\|\underset{n=1}{\overset{N}\bigotimes}\bm{w}_r^n(t)\right\|^{\frac{2}{N}}+\varepsilon_1(t)\Big )^{N-1}.$$

On the other hand, we can write
$$\frac{d}{dt}\left\|\underset{n=1}{\overset{N}\bigotimes}\bm{w}_r^n(t)\right\|
=\left\|\underset{n=1}{\overset{N}\bigotimes}\bm{w}_r^n(t)\right\|^2
\sum_{n=1}^N\frac{1}{\|\bm{w}_r^n(t)\|^2}\gamma_r(t).$$
Since, $$\frac{1}{\|\bm{w}_r^n(t)\|^2}\geq \frac{1}{\|\underset{n=1}{\overset{N}\bigotimes}\bm{w}_r^n(t)\|^{\frac{2}{N}}+\varepsilon_1(t)},$$
We obtain the following lower bound:
$$\frac{d}{dt}\left\|\underset{n=1}{\overset{N}\bigotimes}\bm{w}_r^n(t)\right\|\geq N\gamma_r(t)\frac{\left\|\underset{n=1}{\overset{N}\bigotimes}\bm{w}_r^n(t)\right\|^2}
{\left\|\underset{n=1}{\overset{N}\bigotimes}\bm{w}_r^n(t)\right\|^{\frac{2}{N}}+\varepsilon_1(t)}.$$
Thus
$$N\gamma_r(t)\frac{\left\|\underset{n=1}{\overset{N}\bigotimes}\bm{w}_r^n(t)\right\|^2}
{\left\|\underset{n=1}{\overset{N}\bigotimes}\bm{w}_r^n(t)\right\|^{\frac{2}{N}}+\varepsilon_1(t)}\leq \frac{d}{dt}\left\|\underset{n=1}{\overset{N}\bigotimes}\bm{w}_r^n(t)\right\| \leq N\gamma_r(t)\Big(\left\|\underset{n=1}{\overset{N}\bigotimes}\bm{w}_r^n(t)\right\|^{\frac{2}{N}}+\varepsilon_1(t)\Big)^{N-1}.$$
When $\gamma_r(t)\leq 0$, this inequality is reversed.

It is easy to see that when $\varepsilon(0) = 0$, $\varepsilon_1(0)$ will be also equal to zero. Moreover by Lemma~\ref{lem:dy}, $\varepsilon_1(t)$ stays constant over time, and thus
\begin{align*}
 \frac{d}{dt}\left\|\underset{n=1}{\overset{N}\bigotimes}\bm{w}_r^n(t)\right\|
&=N\left\|\underset{n=1}{\overset{N}\bigotimes}\bm{w}_r^n(t)\right\|^{2-\frac{2}{N}}\left\langle-\nabla \mathcal{L}(\mathbf{\mathcal{W}}(t)),\underset{n'=1}{\overset{N}\bigotimes}
\prod_{i=1}^{k_{n'}}\bm{A}_i^{n',r}(t)\widehat{\bm{w}}_r^{n'}(t)\right\rangle.\\
&=N\left\|\underset{n=1}{\overset{N}\bigotimes}\bm{w}_r^n(t)\right\|^{2-\frac{2}{N}}\prod_{n'=1}^N
\prod_{i=1}^{k_{n'}}\|\bm{A}_i^{n',r}(t)\|\left\langle-\nabla \mathcal{L}({\mathcal{W}(t)}),\underset{n'=1}{\overset{N}\bigotimes}
\prod_{i=1}^{k_{n'}}\widehat{\bm{A}}_i^{n',r}(t)\widehat{\bm{w}}_r^{n'}(t)\right\rangle,
\end{align*}
where $\widehat{\bm{A}}_i^{n',r}(t)=\displaystyle\frac{\bm{A}_i^{n',r}(t)}{\|\bm{A}_i^{n',r}(t)\|}$.

Assuming that $\varepsilon(0)=0$, which implies that $\varepsilon_1(0)=0$, and using Lemma~\ref{lem:dy} we also obtain that 
$$\|\bm{A}_i^{n,r}(t)\|=\|\bm{w}_r^m(t)\|=\left\|\underset{m=1}{\overset{N}\bigotimes}\bm{w}_r^m(t)\right\|^{\frac{1}{N}}.$$
Plugging this into the equality just above gives
\begin{align*}
\frac{d}{dt}\|\underset{n=1}{\overset{N}\bigotimes}\bm{w}_r^n(t)\|&=N\left\|\underset{n=1}{\overset{N}\bigotimes}
\bm{w}_r^n(t)\right\|^{2-\frac{2}{N}}\prod_{n'=1}^N
\prod_{i=1}^{k_{n'}}\left\|\underset{n=1}{\overset{N}\bigotimes}\bm{w}_r^n(t)\right\|^{\frac{1}{N}}
\left\langle-\nabla \mathcal{L}({\mathcal{W}(t)}),\underset{n'=1}{\overset{N}\bigotimes}\prod_{i=1}^{k_{n'}}
\widehat{\bm{A}}_i^{n',r}(t)\widehat{\bm{w}}_r^{n'}(t)\right\rangle \\
&=N\left\|\underset{n=1}{\overset{N}\bigotimes}\bm{w}_r^n(t)\right\|^{2-\frac{2}{N}+\frac{k_1+\hdots+k_N}{N}}\left\langle-\nabla \mathcal{L}({\mathcal{W}(t)}),\underset{n'=1}{\overset{N}\bigotimes}\prod_{i=1}^{k_{n'}}
\widehat{\bm{A}}_i^{n',r}(t)\widehat{\bm{w}}_r^{n'}(t)\right\rangle,
\end{align*}
which completes the proof.


\subsubsection{Proof of $(ii)$}

The proof is based on the following lemma which characterizes the evolution of the singular values of the matrix $\bm{A}^{n,r}(t) :=\displaystyle\prod_{i=1}^{k_{n}}\bm{A}_i^{n,r}(t)$. We denote by $\left( \sigma_{l}^{n,r} (t) \right)_{1\leq l\leq d_n}$ the singular values of $\bm{A}^{n,r}(t)$. 
The singular value decomposition of $\bm{A}^{n,r}(t)$ is $\bm{A}^{n,r}(t) = \bm{U}^{n,r}(t) \bm{S}^{n,r}(t) \bm{V}^{n,r}(t)^\top$, with $\bm{S}^{n,r}(t) = \mathrm{diag} \{ \sigma_{l}^{n,r} (t), 1\leq l\leq d_n\}$ and $\bm{U}^{n,r}(t) \in \mathbb{R}^{d_n \times d_n}$ and $\bm{V}^{n,r}(t)\in \mathbb{R}^{d_n \times d_n}$ are two orthogonal matrices.

\medskip
\begin{lemma}
If  
$\{\bm{A}_i^{n,r}(0)\}_{i=1}^{k_n}$
are matrices satisfying $\bm{A}_i^{n,r}(0)^\top \bm{A}_i^{n,r}(0) = \bm{A}_{i+1}^{n,r}(0) \bm{A}_{i+1}^{n,r}(0)^\top,$ for all  $i\in\{1,\ldots,k_n-1\}$,  then the singular values of the product matrix $\bm{A}^{n,r}(t)$ evolve by:
\begin{equation*}
\frac{d}{dt} \big( \sigma_{l}^{n,r} (t) \big) = k_n \big( \sigma_{l}^{n,r} (t) \big)^{2(1-\frac{1}{k_n})} \beta_l^{n,r}(t)\quad , \ l=1,\ldots, d_n,
\end{equation*}
where $\beta_l^{n,r}(t) = \left\langle -\nabla \mathcal{L}(\bm{\mathcal{W}}(t)), \underset{n'=1}{\overset{n-1}\bigotimes} \prod_{i=1}^{k_{n'}}\bm{A}_i^{n',r}(t)\bm{w}_r^{n'}(t)\otimes \bm{u}_l^{n,r}(t) \bm{v}_l^{n,r}(t)^\top \bm{w}_r^{n}(t) \otimes \underset{n'=n+1}{\overset{N}\bigotimes}\displaystyle\prod_{i=1}^{k_{n'}}
	\bm{A}_i^{n',r}(t)\bm{w}_r^{n'}(t)
\right\rangle$, and $\bm{u}_l^{n,r}(t)$ and $\bm{v}_l^{n,r}(t)$ are the $l$-th columns of $\bm{U}^{n,r}(t)$ and $\bm{V}^{n,r}(t)$, respectively.
\label{lem:sv}
\end{lemma}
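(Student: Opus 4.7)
The plan is to adapt the proof of Theorem~3 of \citet{arora2019implicit}, which treats the analogous statement for deep linear networks, to the CP-tensor setting. Let $\bm{A}^{n,r}(t) := \prod_{i=1}^{k_n} \bm{A}_i^{n,r}(t)$. My first step is to check that the balancedness condition $\bm{A}_i^{n,r}(t)^\top \bm{A}_i^{n,r}(t) = \bm{A}_{i+1}^{n,r}(t)\bm{A}_{i+1}^{n,r}(t)^\top$ is preserved by the gradient flow. Using the explicit formula \eqref{eq:deriveAi} for $\dot{\bm{A}}_i^{n,r}(t)$, a direct computation shows
\[
\tfrac{d}{dt}\bigl[\bm{A}_i^{n,r\,\top}\bm{A}_i^{n,r}\bigr] \;=\; \tfrac{d}{dt}\bigl[\bm{A}_{i+1}^{n,r}\bm{A}_{i+1}^{n,r\,\top}\bigr],
\]
so the hypothesis at $t=0$ persists for all $t\ge 0$.

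Next, I would view the loss purely as a function of the product matrix $\bm{A}^{n,r}$ with all other parameters fixed. A short Taylor expansion (identical to the computation in the proof of Lemma~\ref{lem:dy}\,(i), but differentiating w.r.t.\ the product rather than a single factor) combined with Lemma~\ref{lem1} gives the effective gradient
\[
\mathcal{G}^{n,r}(t) \;=\; [\nabla\mathcal{L}(\mathcal{W}(t))]_{(n)}\Bigl(\underset{n'\ne n}{\odot}\prod_{j=1}^{k_{n'}}\bm{A}_j^{n',r}(t)\bm{w}_r^{n'}(t)\Bigr)\bm{w}_r^n(t)^\top.
\]
Expanding $\dot{\bm{A}}^{n,r}(t) = \sum_{i=1}^{k_n}\bigl(\prod_{j<i}\bm{A}_j^{n,r}\bigr)\,\dot{\bm{A}}_i^{n,r}\,\bigl(\prod_{j>i}\bm{A}_j^{n,r}\bigr)$ and pushing the preserved balancedness through the product inductively yields the key identity
\[
\dot{\bm{A}}^{n,r}(t) \;=\; -\sum_{j=1}^{k_n}\bigl(\bm{A}^{n,r}(t)\bm{A}^{n,r}(t)^\top\bigr)^{\frac{j-1}{k_n}}\,\mathcal{G}^{n,r}(t)\,\bigl(\bm{A}^{n,r}(t)^\top\bm{A}^{n,r}(t)\bigr)^{\frac{k_n-j}{k_n}},
\]
which is exactly the deep matrix factorization flow equation of \citet{arora2019implicit}.

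From here the singular-value dynamics follow the standard route. Writing $\sigma_l^{n,r}(t) = \bm{u}_l^{n,r}(t)^\top\bm{A}^{n,r}(t)\bm{v}_l^{n,r}(t)$ and using $\dot{\bm{u}}_l^{n,r\,\top}\bm{u}_l^{n,r} = \dot{\bm{v}}_l^{n,r\,\top}\bm{v}_l^{n,r} = 0$, one obtains $\dot{\sigma}_l^{n,r} = \bm{u}_l^{n,r\,\top}\dot{\bm{A}}^{n,r}\bm{v}_l^{n,r}$. Substituting the identity above and using $\bm{A}^{n,r}\bm{v}_l^{n,r} = \sigma_l^{n,r}\bm{u}_l^{n,r}$, $\bm{u}_l^{n,r\,\top}\bm{A}^{n,r} = \sigma_l^{n,r}\bm{v}_l^{n,r\,\top}$, each of the $k_n$ summands collapses to $(\sigma_l^{n,r})^{2(1-1/k_n)}\langle -\mathcal{G}^{n,r}(t),\bm{u}_l^{n,r}\bm{v}_l^{n,r\,\top}\rangle$, producing the claimed factor $k_n(\sigma_l^{n,r})^{2(1-1/k_n)}$. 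Finally, applying Lemma~\ref{lem1} in reverse converts $\langle-\mathcal{G}^{n,r}(t),\bm{u}_l^{n,r}\bm{v}_l^{n,r\,\top}\rangle$ back into the tensor form defining $\beta_l^{n,r}(t)$.

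The main obstacle is the inductive derivation of the closed-form ODE for $\dot{\bm{A}}^{n,r}(t)$: transporting the local identity $\bm{A}_i^{n,r\,\top}\bm{A}_i^{n,r} = \bm{A}_{i+1}^{n,r}\bm{A}_{i+1}^{n,r\,\top}$ across all $k_n$ factors of the product, so that each piece $\bm{A}^{n,r\,\top}\bm{A}^{n,r}$ or $\bm{A}^{n,r}\bm{A}^{n,r\,\top}$ raised to a fractional power sits in exactly the right slot, requires careful bookkeeping. This is the only substantive technical step; everything else is a mechanical specialization of the matrix-factorization argument to the effective gradient $\mathcal{G}^{n,r}(t)$.
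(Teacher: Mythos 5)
Your proposal is correct and follows essentially the same route as the paper: both proofs show balancedness of $\bm{A}_i^{n,r\,\top}\bm{A}_i^{n,r}=\bm{A}_{i+1}^{n,r}\bm{A}_{i+1}^{n,r\,\top}$ is preserved by the flow (via the common $\dot{\bm{A}}_i^{n,r}$ formula), reduce $\dot{\bm{A}}^{n,r}(t)$ to the same $\sum_j(\bm{A}\bm{A}^\top)^{(j-1)/k_n}\mathcal{G}^{n,r}(\bm{A}^\top\bm{A})^{(k_n-j)/k_n}$ flow equation of \citet{arora2019implicit} using the SVD alignment across factors, and then read off the singular-value dynamics via $\dot\sigma_l=\bm{u}_l^\top\dot{\bm{A}}\bm{v}_l$ and Lemma~\ref{lem1}. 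The only cosmetic difference is that you frame $\mathcal{G}^{n,r}$ as the gradient with respect to the product matrix whereas the paper simply reads it off from the per-factor gradients, but these are the same object and the ``careful bookkeeping'' step you flag as the main obstacle is exactly what the paper discharges through the explicit SVD/commuting-block-rotation argument borrowed from \citet{arora2018optimization}.
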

\begin{proof}
First, we use the same arguments of the proof of Theorem~1 in \citet{arora2018optimization} to prove that:
$$\bm{A}_i^{n,r}(t)^\top \bm{A}_i^{n,r}(t) = \bm{A}_{i+1}^{n,r}(t) \bm{A}_{i+1}^{n,r}(t)^\top \quad, \ \forall t \geq 0, \forall i\in\{1,\ldots,k_n-1\}.$$
Indeed, since by~\eqref{eq:deriveAi} we have
\begin{equation*}
    \frac{d}{dt} \bm{A}_i^{n,r}(t) = - \Big(\displaystyle\prod_{j=1}^{i-1}\bm{A}_j^{n,r}(t)\Big)^\top[\nabla \mathcal{L}(\mathbf{\mathcal{W}})]_{(n)}\Big (\underset{n'\ne n}{\odot}\displaystyle\prod_{j=1}^{k_{n'}}\bm{A}_j^{n',r}(t)\bm{w}_r^{n'}(t)\Big )\bm{w}_r^{n}(t)^\top\Big (\displaystyle\prod_{j=i+1}^{k_n}\bm{A}_j^{n,r}(t)\Big )^\top,
\end{equation*}
then $\displaystyle\frac{d}{dt} \left(\bm{A}_i^{n,r}(t)^\top \bm{A}_i^{n,r}(t)\right) = \frac{d}{dt} \left( \bm{A}_{i+1}^{n,r}(t) \bm{A}_{i+1}^{n,r}(t)^\top \right)$. Having that $\bm{A}_i^{n,r}(0)^\top \bm{A}_i^{n,r}(0) = \bm{A}_{i+1}^{n,r}(0) \bm{A}_{i+1}^{n,r}(0)^\top$, we obtain \begin{equation}
\label{eq:AiAi+1}
    \bm{A}_i^{n,r}(t)^\top \bm{A}_i^{n,r}(t) = \bm{A}_{i+1}^{n,r}(t) \bm{A}_{i+1}^{n,r}(t)^\top, \forall t \geq 0. 
\end{equation}
Using the singular value decomposition of $\bm{A}_{i}^{n,r}(t)$ and $\bm{A}_{i+1}^{n,r}(t)$, \eqref{eq:AiAi+1} implies that   $\bm{A}_{i}^{n,r}(t)$ and $\bm{A}_{i+1}^{n,r}(t)$ have the same singular values. So, the two matrices can then be written as $\bm{A}_{i}^{n,r}(t) = \bm{U}_i^{n,r}(t) \bm{\Sigma}^{n,r}(t) \bm{V}_i^{n,r}(t)^\top$ and $\bm{A}_{i+1}^{n,r}(t) = \bm{U}_{i+1}^{n,r}(t) \bm{\Sigma}^{n,r}(t) \bm{V}_{i+1}^{n,r}(t)^\top$, where $\bm{\Sigma}^{n,r}(t) = \mathrm{diag}\big(\lambda_{1}^{n,r}(t) \bm{I}_{\alpha_1}, \ldots, \lambda_{m}^{n,r}(t) \bm{I}_{\alpha_m} \big)$, where, $\forall s\in\{1,\ldots,m\}$, $\alpha_s$ is the multiplicity of the singular value $\lambda_{s}^{n,r}(t)$ and $\bm{I}_{\alpha_s}$ is the $\alpha_s \times \alpha_s$ identity matrix. 
Moreover, \eqref{eq:AiAi+1} also implies that
$$ \bm{U}_{i+1}^{n,r}(t) = \bm{V}_{i}^{n,r}(t) \bm{O}_{i}^{n,r}(t), $$ where $\bm{O}_{i}^{n,r}(t) = \mathrm{diag}\big( \bm{O}_{i,1}^{n,r}(t),\ldots, \bm{O}_{i,m}^{n,r}(t) \big)$ and $\bm{O}_{i,s}^{n,r}(t) \in \mathbb{R}^{\alpha_s \times \alpha_s}$ is an orthogonal matrix, $\forall s \in\{1,\ldots,m\}$~\citep[Section A.1]{arora2018optimization}.

Using the fact that, $\forall i \in\{1,\ldots,k_n-1\}$, $\bm{O}_{i}^{n,r}(t)$ and $\bm{\Sigma}^{n,r}(t)$ commute, we obtain that 
\begin{equation}
\label{eq:svdA}
    \bm{A}^{n,r}(t) :=\displaystyle\prod_{i=1}^{k_{n}}\bm{A}_i^{n,r}(t) = \bm{U}_1^{n,r}(t) \prod_{i=1}^{k_n-1}\bm{O}_i^{n,r}(t) \big(\bm{\Sigma}^{n,r}(t)\big)^{k_n} \bm{V}_{k_n}^{n,r}(t)^\top,
\end{equation}
and 
\begin{equation}
\label{eq:s=sigma}
    \bm{S}^{n,r}(t) = \big(\bm{\Sigma}^{n,r}(t)\big)^{k_n}.
\end{equation}

We now characterize the evolution of the singular values of $\bm{A}^{n,r}(t)$ using the same arguments as in the proof of Theorem 3 in~\citet{arora2019implicit}. 
\begin{align}
\label{eq:deriveA}
    & \frac{d}{dt} \bm{A}^{n,r}(t) = \sum_{j=1}^{k_n} \prod_{i=1}^{j-1} \bm{A}_{i}^{n,r}(t) \big( \frac{d}{dt} \bm{A}_{j}^{n,r}(t)\big) \prod_{i=j+1}^{k_n} \bm{A}_{i}^{n,r}(t) \nonumber \\
    & = - \sum_{j=1}^{k_n} \prod_{i=1}^{j-1} \bm{A}_{i}^{n,r}(t)
    \Big(\displaystyle\prod_{i=1}^{j-1}\bm{A}_i^{n,r}(t)\Big)^\top[\nabla \mathcal{L}(\mathbf{\mathcal{W}})]_{(n)}\Big (\underset{n'\ne n}{\odot}\displaystyle\prod_{i=1}^{k_{n'}}\bm{A}_i^{n',r}(t)\bm{w}_r^{n'}(t)\Big )\bm{w}_r^{n}(t)^\top\Big (\displaystyle\prod_{i=j+1}^{k_n}\bm{A}_i^{n,r}(t)\Big )^\top
    \prod_{i=j+1}^{k_n} \bm{A}_{i}^{n,r}(t) \nonumber \\
    & \overset{(*)}{=} - \sum_{j=1}^{k_n} \left[ \bm{A}^{n,r}(t) \bm{A}^{n,r}(t)^\top\right]^{\frac{j-1}{k_n}} 
    [\nabla \mathcal{L}(\mathbf{\mathcal{W}})]_{(n)}\Big (\underset{n'\ne n}{\odot}\displaystyle\prod_{i=1}^{k_{n'}}\bm{A}_i^{n',r}(t)\bm{w}_r^{n'}(t)\Big )\bm{w}_r^{n}(t)^\top
    \left[ \bm{A}^{n,r}(t)^\top \bm{A}^{n,r}(t)\right]^{\frac{k_n-j}{k_n}}.
    \end{align}
(*) follows from~\eqref{eq:svdA} as in~\citet[Proof of Thm. 1]{arora2018optimization}.

Let $\bm{u}_l^{n,r}(t)$ and $\bm{v}_l^{n,r}(t)$ be the $l$-th columns of $\bm{U}^{n,r}(t)$ and $\bm{V}^{n,r}(t)$, respectively.
Using Eq. 25 in \citet{arora2019implicit}, we have
\begin{align*}
    & \frac{d}{dt} \sigma_l^{n,r}(t)  = \bm{u}_{l}^{n,r}(t)^\top \left[\frac{d}{dt} \bm{A}^{n,r}(t)\right] \bm{v}_{l}^{n,r}(t) \\
    & = -k_n \left( \sigma_l^{n,r}(t)\right)^{2\frac{k_n-1}{k_n}} \bm{u}_{l}^{n,r}(t)^\top [\nabla \mathcal{L}(\mathbf{\mathcal{W}})]_{(n)}\Big (\underset{n'\ne n}{\odot}\displaystyle\prod_{i=1}^{k_{n'}}\bm{A}_i^{n',r}(t)\bm{w}_r^{n'}(t)\Big )\bm{w}_r^{n}(t)^\top
    \bm{v}_{l}^{n,r}(t) \text{ \ (using Eq.~\ref{eq:deriveA} and Eq.~\ref{eq:svdA})}\\
    & =  -k_n \left( \sigma_l^{n,r}(t)\right)^{2(1-\frac{1}{k_n})} 
\left\langle \bm{u}_{l}^{n,r}(t) \bm{v}_{l}^{n,r}(t)^\top \bm{w}_r^{n}(t),  [\nabla \mathcal{L}(\mathbf{\mathcal{W}})]_{(n)}\Big (\underset{n'\ne n}{\odot}\displaystyle\prod_{i=1}^{k_{n'}}\bm{A}_i^{n',r}(t)\bm{w}_r^{n'}(t)\Big) \right\rangle   \\
& \overset{(**)}{=}  -k_n \left( \sigma_l^{n,r}(t)\right)^{2(1-\frac{1}{k_n})} 
\left\langle \nabla \mathcal{L}(\mathbf{\mathcal{W}}), \underset{n'=1}{\overset{n-1}\bigotimes} \prod_{i=1}^{k_{n'}}\bm{A}_i^{n',r}(t)\bm{w}_r^{n'}(t)\otimes \bm{u}_l^{n,r}(t) \bm{v}_l^{n,r}(t)^\top \bm{w}_r^{n}(t) \otimes \underset{n'=n+1}{\overset{N}\bigotimes}\displaystyle\prod_{i=1}^{k_{n'}}
	\bm{A}_i^{n',r}(t)\bm{w}_r^{n'}(t) \right\rangle.
\end{align*}
(**) follows from Lemma~\ref{lem1}.
\end{proof}

\paragraph{{Proof of Theorem~\ref{th:dy_deepcp} $\mathbf{(ii)}$}}
 As shown in Lemma~\ref{lem:sv}, if $\bm{A}_i^{n,r}(0)^\top \bm{A}_i^{n,r}(0) = \bm{A}_{i+1}^{n,r}(0) \bm{A}_{i+1}^{n,r}(0)^\top$, then all the matrices $\{\bm{A}_{i}^{n,r}(t)\}_{i=1}^{k_n}$ have the same singular values, $\forall t \geq 0$.
Let $\bm{\Sigma}^{n,r}(t) := \mathrm{diag} \{ \rho_{l}^{n,r} (t), 1\leq l\leq d_n\}$ be their diagonal singular value  matrix. 
Using~\eqref{eq:s=sigma}, we then have
\begin{equation}
\label{eq:sv}
\sigma_l^{n,r}(t) = \big(\rho_{l}^{n,r} (t)\big)^{k_n},
\end{equation}
where $\{\sigma_l^{n,r} (t)\}_{l=1}^{d_n}$ are the singular values of $\bm{A}^{n,r}(t):=\displaystyle\prod_{i=1}^{k_{n}}\bm{A}_i^{n,r}(t)$.

Using Lemma~\ref{lem:sv} above and Lemma~4 in~\citet{arora2019implicit} and since $1-\frac{1}{k_n} \geq \frac{1}{2}$, we obtain that $\sigma_l^{n,r}(t) = 0$, $\forall t\geq 0$, if $\sigma_l^{n,r}(0) = 0$. Thus by~\eqref{eq:sv}, we  have $\rho_l^{n,r}(t) = 0$, $\forall t\geq 0$, if $\rho_l^{n,r}(0) = 0$. 
Since $\{\bm{A}_i^{n,r}(0)\}_{i=1}^{k_n}$ are rank-one matrices, then $\rho_l^{n,r}(0) = 0$, $\forall l \in \{2,\ldots,d_n\}$. This implies that $\rho_l^{n,r}(t) = 0$, $\forall l \in \{2,\ldots,d_n\}, \forall t\geq 0$.
\begin{align*}
    \left\|\underset{i=1}{\overset{k_n}\prod}{\bm{A}}_i^{n,r}(t){\bm{w}}_r^{n}(t)\right\|^2 & = \left\| \bm{U}_1^{n,r}(t) \prod_{i=1}^{k_n-1}\bm{O}_i^{n,r}(t) \big(\bm{\Sigma}^{n,r}(t)\big)^{k_n} \bm{V}_{k_n}^{n,r}(t)^\top \bm{w}_r^{n}(t)\right\|^2    \text{(by Eq.~\ref{eq:svdA})} \\
    %
    %
    %
    & = \left\| \big(\bm{\Sigma}^{n,r}(t)\big)^{k_n} \bm{V}^{n,r}_{k_n}(t)^\top \bm{w}_r^{n}(t)\right\|^2\\
    & = \big(\rho_1^{n,r}(t)\big)^{2k_n} \left\langle \tilde{\bm{v}}_{r}^{n}(t), \bm{w}_r^{n}(t)\right\rangle^2,
\end{align*}
with $\tilde{\bm{v}}_{r}^{n}(t)$ is the first column of $\bm{V}_{k_n}^{n,r}(t)$.

Now, we have
\begin{align*}
    \delta_r(t) &:=  \left\langle -\nabla \mathcal{L}(\mathcal{W}), \underset{n=1}{\overset{N}\bigotimes} 
\underset{i=1}{\overset{k_n}\prod}
\widehat{\bm{A}}_i^{n,r}(t)\widehat{\bm{w}}_r^{n}(t)\right\rangle\\
& = \left\langle -\nabla \mathcal{L}(\mathcal{W}), \underset{n=1}{\overset{N}\bigotimes} \frac{
\underset{i=1}{\overset{k_n}\prod}
{\bm{A}}_i^{n,r}(t){\bm{w}}_r^{n}(t)}{\left(\underset{i=1}{\overset{k_n}\prod}\left\| {\bm{A}}_i^{n,r}(t) \right\|\right) \| \bm{w}_r^{n}(t)\|}\right\rangle\\
& = \left\langle -\nabla \mathcal{L}(\mathcal{W}), \underset{n=1}{\overset{N}\bigotimes} \frac{
\underset{i=1}{\overset{k_n}\prod}
{\bm{A}}_i^{n,r}(t){\bm{w}}_r^{n}(t)} {\left\|\underset{i=1}{\overset{k_n}\prod}{\bm{A}}_i^{n,r}(t){\bm{w}}_r^{n}(t)\right\|}\right\rangle \underset{n=1}{\overset{N}\prod}\frac{\left\|\underset{i=1}{\overset{k_n}\prod}{\bm{A}}_i^{n,r}(t){\bm{w}}_r^{n}(t)\right\|}{ \left(\underset{i=1}{\overset{k_n}\prod}\left\| {\bm{A}}_i^{n,r}(t) \right\|\right) \| \bm{w}_r^{n}(t)\|}.
\end{align*}
Moreover,
\begin{align*}
    \underset{n=1}{\overset{N}\prod}\frac{\left\|\underset{i=1}{\overset{k_n}\prod}{\bm{A}}_i^{n,r}(t){\bm{w}}_r^{n}(t)\right\|}{ \left(\underset{i=1}{\overset{k_n}\prod}\left\| {\bm{A}}_i^{n,r}(t) \right\|\right) \| \bm{w}_r^{n}(t)\|} 
    &= 
    \underset{n=1}{\overset{N}\prod}\frac{\big(\rho_1^{n,r}(t)\big)^{k_n} |\left\langle \tilde{\bm{v}}_r^{n}(t), \bm{w}_r^{n}(t) \right\rangle |}{\big(\rho_1^{n,r}(t)\big)^{k_n} \| \bm{w}_r^{n}(t) \|}\\
    & = \underset{n=1}{\overset{N}\prod} |\left\langle \tilde{\bm{v}}_r^{n}(t), \widehat{\bm{w}}_r^{n}(t) \right\rangle |\\
    & = \left|  \left\langle \underset{n=1}{\overset{N}\bigotimes}{\tilde{\bm{v}}}_r^{n}(t) , \underset{n=1}{\overset{N}\bigotimes} \widehat{\bm{w}}_r^{n}(t) \right\rangle \right|,
\end{align*}
which concludes the proof.


\section{An Experiment with Rank-One Matrix Initialization}
\label{app:xp}

 \begin{figure}[hbt!]
 	\centering
	\hspace*{-0.3cm}\includegraphics[scale=0.54]{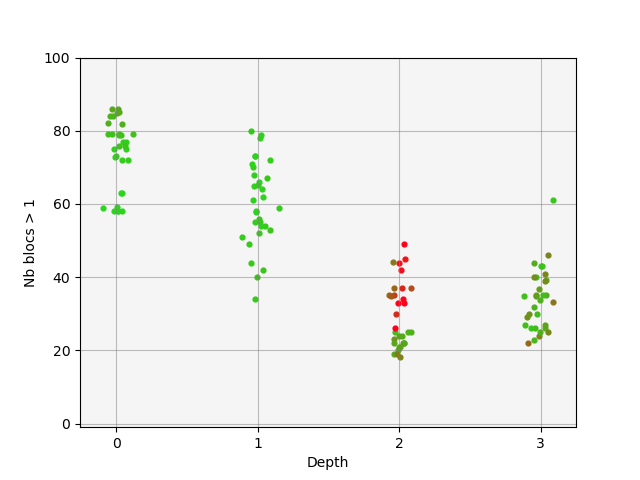}
	\caption{Tensor completion using Meteo-UK data sets with rank-one initialization of the matrix parameters: analysis of the impact of the depth on the rank of the learned tensor. The figure shows for shallow (on the left, depth=0) to deep models (up to depth=5, on the right) the effective rank of the learned tensors for a number of runs that differ from initialization setting. Every single point stands for a learning experiment. Points are plotted with a small random displacement in x and y coordinates to better see point clouds. The color represents the test loss of the model, from green to red respectively for small to large loss}
 		\label{fig:r1}
\end{figure}

We conduct the same experiment as in Section~\ref{subsec:realdata} but with matrices $\bm{A}_i^{n,r}$ initialized by random rank-one matrices.
We used Meteo-UK data set and launched more than 125 learning experiments using various initialization parameters and seeds, for depth ranging from 0 to~5. We report for each experiment the effective CP-rank of the model, i.e. the number of blocks that emerged at convergence~(blocks that have a norm greater than $1$). The percentage of observed and unobserved inputs in the tensor are 20\% and 80\%, respectively.
Figure~\ref{fig:r1} shows the same behaviour as Figures~\ref{fig:multiruns} and~\ref{fig:realdata}. For shallower architectures, the impact of initialization is huge and the solutions are mostly of high rank. For deeper architectures, the learned
tensor generally has a lower rank.


\end{document}